\documentclass[accepted]{uai2026} % for initial submission
\usepackage[american]{babel}
\usepackage{natbib} % has a nice set of citation styles and commands
    \renewcommand{\bibsection}{\subsubsection*{References}}
\usepackage{mathtools} % amsmath with fixes and additions
\usepackage{booktabs} % commands to create good-looking tables
\usepackage{amsmath}
\usepackage{amssymb}
\usepackage{mathtools}
\usepackage{amsthm}

\usepackage{epstopdf,fancyhdr,amsmath,amsthm,xspace,enumerate,paralist, mathtools,tabularx,hhline,tabu,forest,array,stmaryrd}
\usepackage{pgf,tikz}
\usepackage{bm}
\usepackage{subcaption}
\usepackage{mathrsfs}
\usepackage{type1cm}
\usepackage{csquotes}
\usepackage{amssymb}
\usepackage{multirow}
\usepackage{verbatim}
\usepackage{enumitem}
\usepackage{upgreek}
\usepackage{natbib}
\usepackage{upgreek}
\usepackage{pdflscape}
\usepackage{thm-restate}
\usepackage{soul}
\usepackage{dsfont}
\usepackage{stmaryrd}
\usepackage{environ}
\usepackage{wrapfig}
\usepackage{float}
\usepackage{thm-restate}

\usepackage[capitalize,noabbrev]{cleveref}

\theoremstyle{plain}
\newtheorem{theorem}{Theorem}[section]

\theoremstyle{definition}
\newtheorem{definition}[theorem]{Definition}

\theoremstyle{remark}

\theoremstyle{definition}
\newtheorem{example}{Example}

\usepackage{pgfplots}
\DeclareUnicodeCharacter{2212}{−}
\usepgfplotslibrary{groupplots,dateplot}
\usetikzlibrary{patterns,shapes.arrows}
\pgfplotsset{compat=newest}

\usetikzlibrary{positioning,arrows,shapes,shapes.arrows,arrows.meta,trees,shapes.misc,shapes.geometric,decorations.pathreplacing,automata}
\tikzset{%
  >={Latex[width=1.5mm,length=2mm]},
  vertex/.style={draw,circle,inner sep=0mm,semithick,minimum width=4mm},
  point/.style = {circle, draw, inner sep=0.04cm,fill,node contents={}},
  uvertex/.style={outer sep=0},
  bidir/.style={<->,dashed, line width=0.25mm},
  dir/.style={->, line width=0.25mm},
  regime/.style={shape=rectangle,fill=black,inner sep=0pt,minimum size=3pt,draw},
  node distance=1cm,
  font=\scriptsize\sffamily
}

\definecolor{betterred}{RGB}{228,26,28}
\definecolor{betterblue}{RGB}{55,126,184}

\pgfdeclarelayer{back}
\pgfsetlayers{back,main}

\makeatletter
\pgfkeys{%
  /tikz/on layer/.code={
    \def\tikz@path@do@at@end{\endpgfonlayer\endgroup\tikz@path@do@at@end}%
    \pgfonlayer{#1}\begingroup%
  }%
}
\makeatother

\makeatletter
\newcommand{\xdashleftrightarrow}[2][]{\ext@arrow 3359\leftrightarrowfill@@{#1}{#2}}
\def\rightarrowfill@@{\arrowfill@@\relax\relbar\rightarrow}
\def\leftarrowfill@@{\arrowfill@@\leftarrow\relbar\relax}
\def\leftrightarrowfill@@{\arrowfill@@\leftarrow\relbar\rightarrow}
\def\arrowfill@@#1#2#3#4{%
  $\m@th\thickmuskip0mu\medmuskip\thickmuskip\thinmuskip\thickmuskip
   \relax#4#1
   \xleaders\hbox{$#4#2$}\hfill
   #3$%
}
\makeatother

\newcolumntype{C}[1]{>{\centering\arraybackslash}m{#1}}

\newcommand{\Braces}[1]{\left\{ #1\right\}}

\newcommand{\Brackets}[2][]{#1\left[#2 #1\right]}

\newcommand{\Parens}[1]{\left(#1\right)}

\makeatletter
\def\rightarrowfill@@{\arrowfill@@\relax\relbar\rightarrow}
\def\leftarrowfill@@{\arrowfill@@\leftarrow\relbar\relax}
\def\leftrightarrowfill@@{\arrowfill@@\leftarrow\relbar\rightarrow}
\def\arrowfill@@#1#2#3#4{%
  $\m@th\thickmuskip0mu\medmuskip\thickmuskip\thinmuskip\thickmuskip
   \relax#4#1
   \xleaders\hbox{$#4#2$}\hfill
   #3$%
}
\makeatother

\newcommand{\I}{\mathds{1}}

\newcommand{\inv}[2]{P_{#2}\left ( #1\right)}
\newcommand{\invE}[2]{\3E_{#2}\left [ #1\right]}

\newcommand{\doo}{\text{do}}

\def\*#1{\boldsymbol{#1}}
\def\1#1{\mathcal{#1}}
\def\2#1{\mathscr{#1}}
\def\3#1{\mathbb{#1}}
\def\4#1{\mathds{#1}}
\def\5#1{\bar{\*#1}}

\title{Causal Gaussian Processes for Robust Treatment Effect Evaluation with Unobserved Confounding}

\author[1]{Junzhe Zhang}
\author[1]{Jingyuan Chen}
\author[2]{Elias Bareinboim}

\affil[1]{%
    Department of Electrical Engineering and Computer Science\\
    Syracuse University
}

\affil[2]{%
    Department of Computer Science\\
    Columbia University
}

\begin{document}
\maketitle

\begin{abstract}
The presence of confounding bias poses a key challenge in policy evaluation, as the target causal effects of actions are not identifiable (i.e., underdetermined) from observational data. On the other hand, existing confounding-robust evaluation strategies require detailed prior knowledge about the environment or apply only to discrete treatments and outcomes. This paper investigates causal effect evaluation over the continuous domain from confounded observations, while requiring only basic temporal ordering between the treatment and the outcome. We introduce a universal discretization of the exogenous domains that approximates the observational and interventional distributions of any causal model with arbitrary accuracy using a finite number of latent states. Building on this newfound universal approximation property, we develop a novel family of Causal Gaussian process (CGP) models that effectively approximate the observational and interventional distributions of any causal model with confounded observations. %Finally, we validate the proposed approach through comprehensive experiments, demonstrating that it is confounding-robust and yields useful representations for downstream tasks.
\end{abstract}

\section{Introduction}
Gaussian process (GP) modeling is one of the popular approaches for regression problems, learning unknown functional mappings from input to output. It is a Bayesian non-parametric model \citep{gershman2012tutorial} that explicitly expresses the generative process underlying the observed data. By effectively using an infinite number of model parameters and adapting the model's complexity to the data, Gaussian process modeling can capture a wide range of complex function patterns between the input and output domains \citep{williams1998prediction}. For these reasons, it is particularly useful in situations where the learner has only weak parametric knowledge of the underlying function and the function is complex to evaluate analytically. Gaussian process models have been successfully applied across different fields of science, including cosmology \citep{shafieloo2012gaussian}, biology \citep{mcdowell2018clustering}, psychology \citep{bahg2020gaussian}, and chemistry \citep{deringer2021gaussian}, to name a few.

Despite these advancements and successful applications of Gaussian process modeling, significant challenges remain when applying this approach to imperfect, biased observational data. While Gaussian process regression is effective at capturing and mimicking complex patterns in observations, it can often learn and amplify existing biases, leading to deviations from the underlying function. This vulnerability occurs across various types of data biases, including unmeasured confounders, selection bias, missing data, and generalization to heterogeneous environments. The following example illustrates these challenges in the context of unobserved confounding bias.

\begin{figure*}[t]
\centering
    \hfill
        \begin{subfigure}{0.25\linewidth}\centering
            \setlength{\abovecaptionskip}{0pt}
		\includegraphics[width=\linewidth]{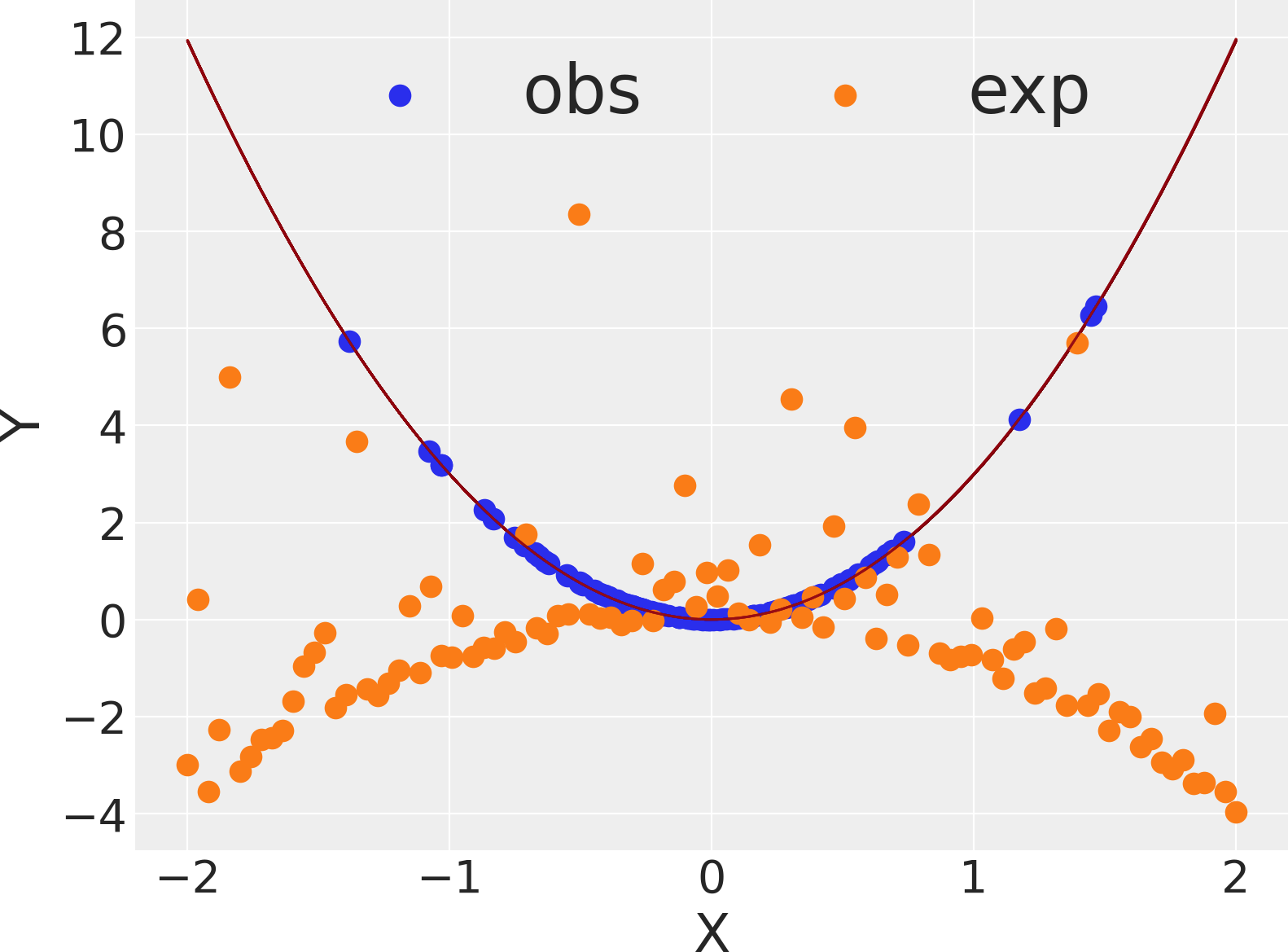}
		\caption{Polynomial}
		\label{fig:gp_confounded}
	\end{subfigure}\hfill
        \begin{subfigure}{0.25\linewidth}\centering
            \setlength{\abovecaptionskip}{0pt}
		\includegraphics[width=\linewidth]{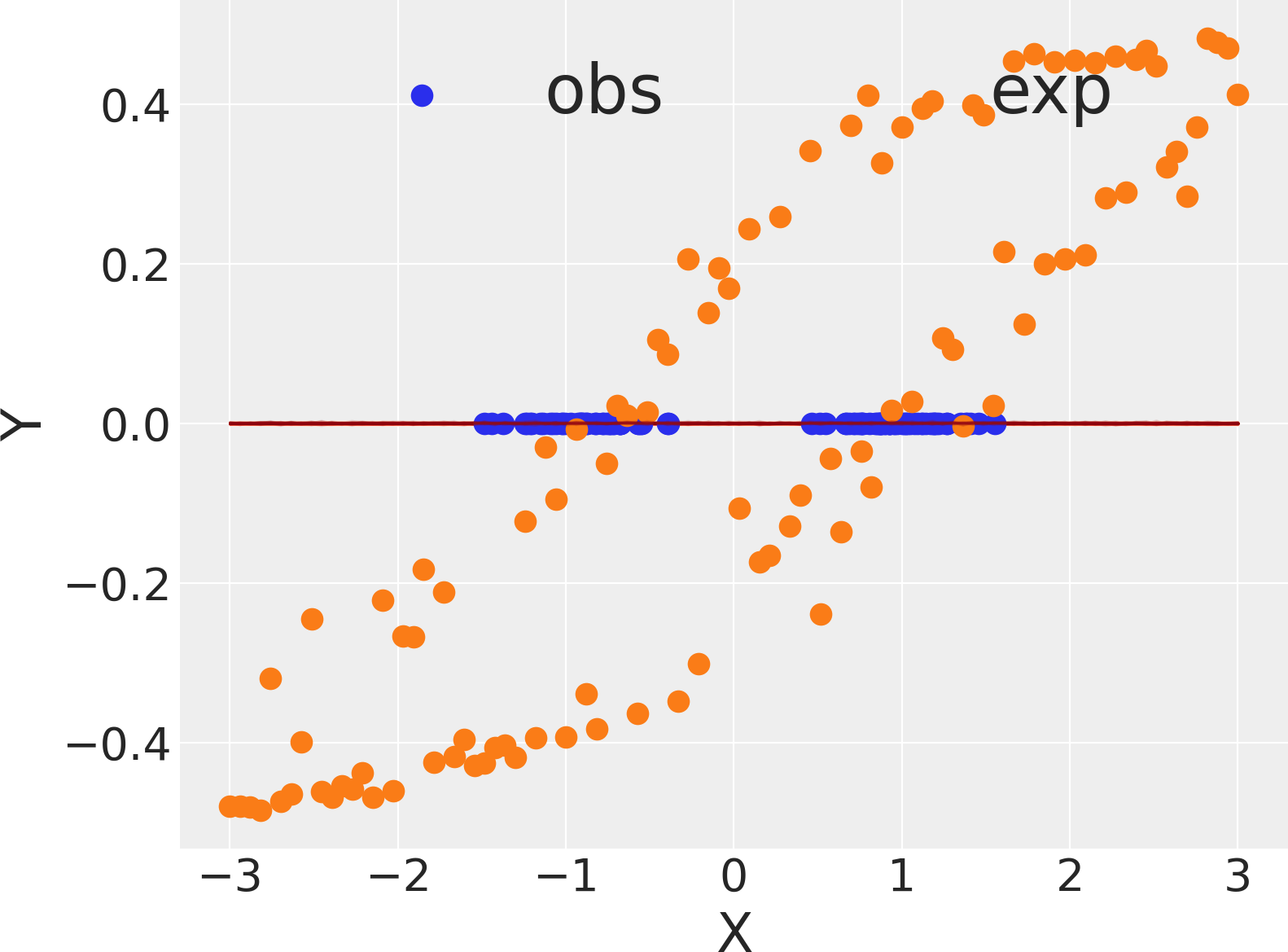}
		\caption{Logistic}
		\label{fig:gp_confounded_logistic}
	\end{subfigure}\hfill
        \begin{subfigure}{0.25\linewidth}\centering
            \setlength{\abovecaptionskip}{0pt}
		\includegraphics[width=\linewidth]{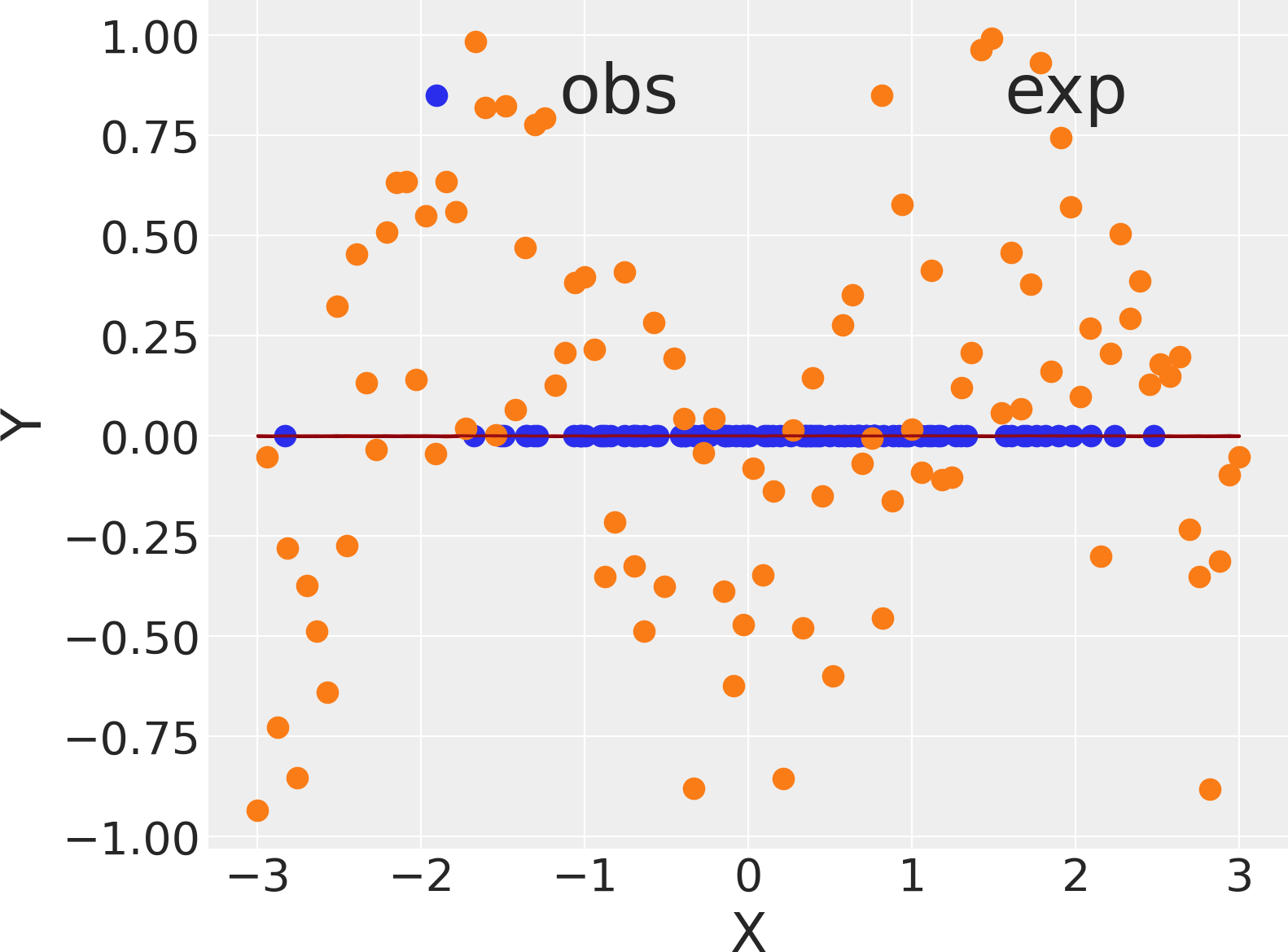}
		\caption{Phase}
		\label{fig:gp_confounded_phase}
	\end{subfigure}\hfill
    \begin{subfigure}{0.25\linewidth}\centering
            \setlength{\abovecaptionskip}{0pt}
		\includegraphics[width=\linewidth]{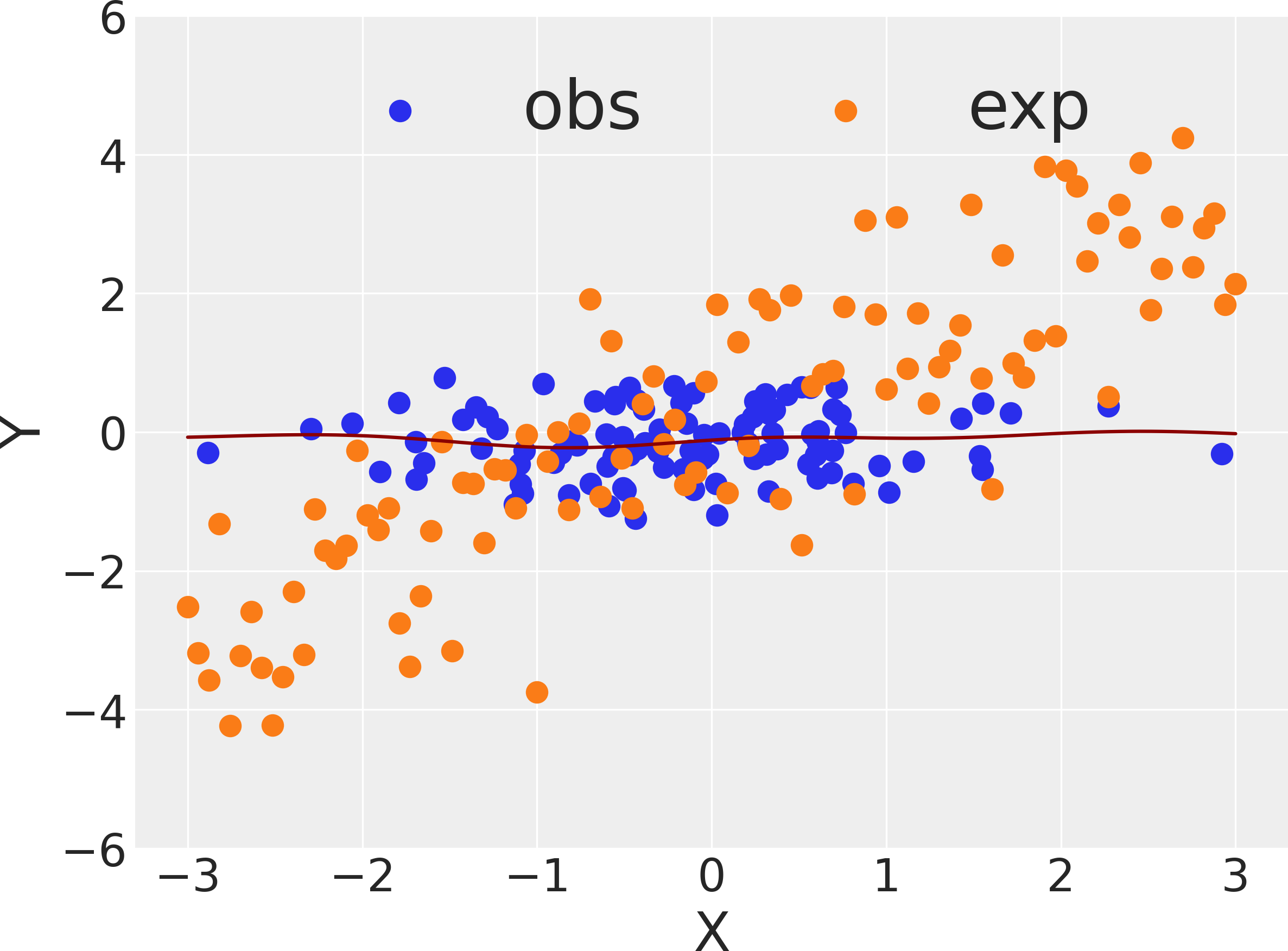}
		\caption{Linear}
        \label{fig:gp_confounded_linear}
		% \label{fig:gp_confounded_ist}
	\end{subfigure}\hfill\null
  \caption{Samples drawn from the observational $P(X, Y)$ (\textcolor{blue}{blue}) and interventional $P_x(Y)$ (\textcolor{orange}{orange}) distributions defined by various reward functions. These functions include: (\subref{fig:gp_confounded}) polynomial function; (\subref{fig:gp_confounded_logistic}) logistic function; (\subref{fig:gp_confounded_phase}) phase function; and (\subref{fig:gp_confounded_linear}) linear function. The regression function is obtained by applying a Gaussian Process model to the observed data.}
  \label{fig:gp}
\end{figure*}

\begin{example}\label{exp:_1}
Consider a data-generating process concerning a system with an action $X$ and a reward $Y$, values of which are decided by functions $X \gets - U / 2$ and $Y \gets -X^2 + U^2$, respectively; $U$ is an unobserved variable drawn from normal distributions with mean $\mu = 0$ and variances $\sigma^2 = 1$. \Cref{fig:gp_confounded} shows the observed samples (highlighted in \textcolor{blue}{blue}) generated by this system, summarized as the observational distribution $P(X, Y)$. We also show samples (\textcolor{orange}{orange}) collected by randomly assigning action values over a real interval $[-2, 2]$, summarized as an interventional distribution $\inv{Y}{x}$. Interestingly, the observational distribution $P(X, Y)$ deviates significantly from the interventional distribution $\inv{Y}{x}$. This is due to the unobserved confounder $U$, which introduces a spurious correlation between the treatment $X$ and the outcome $Y$, leading some actions to appear more effective than they actually are. 

It has been acknowledged in the literature that treatment effects $\invE{Y}{x}$ are not identifiable in such systems \citep{huang:val06,shpitser:pea06a}. Particularly, one could construct an alternative system with reward function $Y \gets 3/4 \times U^2$ that generates the same observed data, but gives a different evaluation on the treatment effects $\invE{Y}{x}$. To witness, we apply standard Gaussian process regression to the observed samples. The learned function, shown in \Cref{fig:gp_confounded}, perfectly fits the conditional reward $\3E[Y\mid x]$. However, it fails to generalize to the actual treatment effect $\invE{Y}{x}$ when one actively intervenes in the system by setting the action $X$ to a constant $x \in [-2, 2]$. $\hfill \blacksquare$
\end{example}
This paper focuses on evaluating the causal effects of treatments from observed data that may be contaminated with unobserved confounding bias.  The problem of identifying causal effects from the combination of observed data and theoretical assumptions about the underlying generative process has been extensively studied under the rubrics of causal inference  \citep{wright:28,angrist:etal96,pearl:2k,spirtes:etal01}. Particularly, qualitative causal knowledge about the underlying environment could be represented in the form of a \textit{directed acyclic causal diagram} \citep[Ch.~1.2]{pearl:2k}. Various criteria and algorithms have been developed based on the causal diagram to identify causal effects. \citep{pearl:2k,spirtes2000causation,bareinboim:pea:16-r450}. This means the conditions under which the target effects are fully recoverable from data have been understood. For example, a criterion called \textit{back-door} \citep[Ch.~3.2.2]{pearl:2k} allows one to identify causal effects by covariate adjustment. This condition is also referred to as \textit{ignorability} \citep{rosenbaum:rub83}. Efficient estimators were developed based on the inverse propensity score weighting \citep{rosenbaum:rub83,bang2005doubly} and off-policy learning \citep{dudik2011doubly,li2015toward,munos2016safe,thomas2016data}.

However, in many practical applications, the combination of qualitative knowledge and observed data does not always allow one to uniquely determine target effects, called \emph{non-identifiable}. Partial identification is a line of methods that allows one to extrapolate from biased, confounded data to infer partial knowledge about non-identifiable causal effects \citep{balke:pea97,frangakis:rub02}. For instance, one may incorporate additional parametric assumptions about the forms of underlying functions and distributions \citep{wright1934method,manski1998monotone,kocaoglu2018causalgan}. However, these assumptions are often untestable and could incur high costs in knowledge engineering. More recently, there has been an increasing body of work studying the approximation property of generative models \citep{zhang2022partial,xia2021causal,xia2022neural,nasr2023counterfactual} in evaluating causal effects, particularly in non-identifiable settings. For instance, for an unknown causal model with discrete observed variables, \citep{zhang2022partial} showed that the domain of latent confounders could be discretized without loss of generality. Identifying unknown causal effects in this class of discrete generative models reduces to solving a series of polynomial programs, which can be further simplified to linear programs in specific settings \citep{balke:pea97}. However, significant challenges remain in applying causal generative modeling to observed data in continuous, complex domains. 

%For example, efficient algorithms exist to identify causal effects from confounded observations provided by the linearity assumption \citep{wright1934method,chen2017identification}. For non-linear functions, parametric conditions have been proposed under which the target effect is identifiable \citep{wang2019blessings,manski1998monotone}. In general, in situations where one could fully express the parametric forms of the generative process for the observed data and latent confounders, the computational framework of deep generative models \citep{goodfellowGenerativeAdversarialNetworks2014a,vahdat2020nvae} and Bayesian non-parametric methods \citep{gershman2012tutorial,blei2017variational} could be applied to obtain an equivalence class of candidate models compatible with observations. Computing the target causal effects in these candidate models thus yields a robust evaluation containing the unknown parameters \citep{kocaoglu2018causalgan,witty2020causal,khemakhem2021causal}. 

%Nevertheless, the causal inference methods described above invoke a detailed parametric model of the underlying causal mechanisms generating the data, which are often untestable in many problem settings, and could incur high costs in knowledge engineering. 

The goal of this paper is to address these challenges by proposing novel Causal Gaussian Processes that can approximate observational and interventional distributions in any unknown causal model with continuous treatment and outcome variables. This newfound approximation property enables us to develop robust partial identification algorithms for inferring unknown causal effects from confounded observational data in continuous domains. In some ways, this result can be seen as generalizing the canonical partitioning of structural causal models \citep{balke:pea94b} from discrete to continuous domains. 
More specifically, our contributions are summarized as follows. (1) We introduce a universal discretization procedure for the exogenous domain for any causal model with continuous treatment and outcome domains. We formally show that the proposed discretization can effectively approximate the observational and interventional distributions in any causal model in continuous domains with arbitrary accuracy. (2) We reparameterize this canonical representation into a novel family of causal GP models. Learning in this generative model family from confounded observations yields a robust posterior distribution over the parameters of the target causal effects. Due to the space constraints, all proofs of the theoretical results are provided in \Cref{appendix:a}; details about the experimental setup are provided in \Cref{appendix:b}.

\section{Evaluating Treatment Effect From Confounded Observations}\label{sec:_2}
This section introduces basic notations and definitions used throughout the paper. We use capital letters to denote variables ($X$) and small letters for their values ($x$). For an arbitrary set $\*X$, let $|\*X|$ be its cardinality. $P(\*X)$ denote the probability distribution on variables $\*X$. We consistently use $P(\*x)$ as a shorthand notation for probability $P(\*X = \*x)$; similarly, $P(\3X)$ stands for the probability $P(\*X \in \3X)$ of an event where values of variables $\*X$ are contained in a collection $\3X$ of possible realizations $\*X = \*x$.

\textbf{Structural Causal Models.} We will focus on the structural causal models \citep{pearl:2k,bareinboim2020pearl} graphically described in \Cref{fig:_mab_obs} where $\*X$ is is a treatment/action; $Y$ is the outcome/reward; $\*Z$ is a set of covariates; and $\*U$ are unobserved exogenous variables representing uncertainties in the environment. For an arbitrary causal model $\1M$, the values of treatment $\*X$, outcome $Y$ and covariates $\*Z$ are decided by structural functions $\*X \gets f_X(\*Z, \*U)$, $Y \gets f_Y(\*X, \*U)$ and $\*Z \gets f_Z(\*U)$ respectively. The causal mechanisms generating latent variables $\*U$ are not explicitly described. Instead, values of $\*U$ are drawn from an exogenous distribution $P(\*U)$ over the $m$-dimensional real space $\3R^m$, for $m \in \3N^+$. The learner does not have access to detailed knowledge of the underlying causal model $\1M$. Instead, it only passively observes the environment and receives action-reward pairs $(\*x, y)$. The probabilities of occurrence of observed events are summarized as the \emph{observational distribution} $P(\*X, Y, \*Z) \triangleq P(\*X, Y, \*Z; \1M)$.

\begin{figure}[t]
\centering
\hfill
\begin{subfigure}{0.48\linewidth}\centering%(a)
  \begin{tikzpicture}
      \def\innerr{2.7}

      \node[vertex] (X) at (0, 0) {X};
      \node[vertex] (Z) at (1, 1.) {Z};
      \node[vertex] (Y) at (2, 0) {Y};

      \draw[dir] (Z) to (Y);
      \draw[dir] (X) -- (Y);
      \draw[dir] (Z) to [bend right = 0] (X);
      \draw[bidir, dashed] (Z) to [bend right = 45] (X);
      \draw[bidir, dashed] (Z) to [bend left = 45] (Y);
      \draw[bidir, dashed] (X) to [bend right = 45] (Y);

  \begin{pgfonlayer}{back}
      \node[circle,fill=betterred!65,draw=none,minimum size=2*\innerr mm] at (X) {};
      \node[circle,fill=betterblue!65,draw=none,minimum size=2*\innerr mm] at (Y) {};
  \end{pgfonlayer}
  \end{tikzpicture}
  \caption{$\1M$}
  \label{fig:_mab_obs}
  \end{subfigure}\hfill
\begin{subfigure}{0.48\linewidth}\centering
\begin{tikzpicture}
      \def\innerr{2.7}

            \node[vertex] (X) at (0, 0) {X};
      \node[vertex] (Z) at (1, 1.) {Z};
      \node[vertex] (Y) at (2, 0) {Y};

      \draw[dir] (Z) to (Y);
      \draw[dir] (X) -- (Y);
      \draw[bidir, dashed] (Z) to [bend left = 45] (Y);
      \draw[bidir, dashed, opacity=0] (X) to [bend right = 45] (Y);

  \begin{pgfonlayer}{back}
      \node[circle,fill=betterred!65,draw=none,minimum size=2*\innerr mm] at (X) {};
      \node[circle,fill=betterblue!65,draw=none,minimum size=2*\innerr mm] at (Y) {};
  \end{pgfonlayer}
  \end{tikzpicture}
    \caption{$\1M_{\*x}$}
   \label{fig:_mab_exp}
  \end{subfigure}\hfill \null
  \caption{Causal diagrams of (a) a contextual bandit model with a treatment/action $\*X$, outcome/reward $Y$, and covariates $\*Z$; (b) the submodel induced by intervention $\doo(\*X \gets \*x)$. Exogenous variables $\*U$ are not explicitly shown; instead, a bi-directed arrow $\*X \leftrightarrow Y$ indicates an unobserved confounder $\*U$ affecting $\*X$ and $Y$ simultaneously.}
  \label{fig:bow}
\end{figure}
An intervention on action $\*X$, denoted by $\doo(\*X \gets \*x)$ (for short, $\doo(\*x)$), is an operation which sets values of $\*X$ to constants $\*x$, replacing the function $f_X$ that would normally determine the action. For a structural causal model $\1M$, let $\1M_{\*x}$ be a submodel of $\1M$ induced by intervention $\doo(\*x)$. Given unit $\*U = \*u$, the \emph{potential outcome} $Y$ to action $\doo(\*x)$, denoted as $Y_{\*x}(\*u)$, is the solution of a variable $Y$ in submodel $\1M_{\*x}$, i.e., $Y_{\*x}(\*u) \triangleq Y_{\1M_{\*x}}(\*u)$. The \emph{interventional distribution} $\inv{Y, \*Z}{\*x}$ induced by $\doo(\*x)$ is defined as the joint distribution over the potential outcome, i.e., $\inv{Y, \*Z}{\*x} \triangleq P\Parens{Y_{\*x}, \*Z_{\*x}; \1M}$. The conditional treatment effect of action $\*X = \*x$ in situation $\*Z = \*z$ is thus defined as the conditional expected value $\invE{Y \mid \*z}{\*x}$.

\textbf{Treatment Evaluation Problem.} In this paper, our goal is to infer the conditional treatment effect $\invE{Y \mid \*z}{\*x}$ for all possible actions $\*x$ in all possible situations $\*z$ (i.e., \textbf{output}) from the finite samples drawn from the observational distribution $P(\*X, Y, \*Z)$ (i.e., \textbf{input}). We will also make the following \textbf{assumptions}. First, the treatment $\*X$ and covariate $\*Z$ are vector variables in a multi-dimensional real space $\*X \in \3R^n$ and $\*Z \in \3R^l$ respectively; the reward $Y$ is a continuous variable taking a real value in $Y \in \3R$. The target treatment effect $\invE{Y \mid \*z}{\*x}$ is Lebesgue-integrable. That is, $\invE{Y \mid \*z }{\*x} : \3R^{n} \times \3R^l \mapsto \3R$ is a Lebesgue-measurable function satisfying the following relationship,
\begin{align}
\int_{\3R^n \times \3R^l} \left \lvert \invE{Y \mid \*z}{\*x} \right \rvert d\*z d\*x < \infty,
\end{align} 
which includes continuous functions, including functions such as the $\text{sgn}$ function. Because we deal with Lebesgue-integrable functions, we adopt the $L_1$ distance as a measure of approximation error.
%We will consistently use the $L_1$ norm $\lVert \*x \rVert_1 = \sum_{i = 1}^{n} |x_i|$ to measure approximation error.

When the parametric knowledge is available for the underlying structural functions $f_X, f_Y, f_Z$ and exogenous distribution $P(\*u)$, one could construct the feasible region of all possible causal models compatible with the observational data $P(\*X, Y, \*Z)$ and causal knowledge (i.e., the temporal ordering of $\*X, Y, \*Z)$. Searching over models in the feasible region and computing their induced treatment effects leads to a robust evaluation of the target causal query. Several Bayesian algorithms and frameworks have been proposed \citep{geman1984stochastic,kingma2013auto,blei2017variational}. However, in practice, detailed parametric knowledge of the underlying generative process is not available, posing challenges for applying existing Bayesian methods to robust causal inference from confounded data.
\section{Canonical Partitioning of Structural Causal Models}
Our goal in this section is to address the aforementioned challenges by introducing a parametric family of canonical generative processes that could approximate the observational distribution and the target treatment effect in any structural causal model described in \Cref{fig:_mab_obs}. Specifically, this canonical family allows one to construct an equivalent class of parametric models that are compatible with the observational data and available causal knowledge. \Cref{fig:bounding} illustrates this learning process. Since the constructed canonical models are expressive enough to represent the observations and effects of interventions in any candidate causal models, there is no loss of generality when inferring causal effects in this canonical family. More importantly, since the parametric forms of the generative processes in these canonical models are explicitly specified, the learner can apply the standard computational frameworks in Bayesian inference and deep learning to infer the posterior causal effect from available data and knowledge. 

\begin{figure}[t]
\centering
      \begin{tikzpicture}[-, scale=0.8, every node/.append style={transform shape}]
        \filldraw [fill=gray!50, line width=0.01mm] (-2.05,-0.05) ellipse (2.0 and 1.0);
        \filldraw [fill=gray!20, line width=0.01mm] (-1.6,0.1) ellipse (1.2 and 0.6);
      	\node [font=\fontsize{9}{0}\selectfont] at (-3.8,-1.1) {True Model};
            \node [inner sep=0] (scm) at (-3.8,-0.9) {};
      	\node [inner sep=0] (scmspace) at (-3.3,-0.5) {};
      	\path [-] (scm) edge (scmspace);
       
      	\node [align=center, font=\fontsize{9}{0}\selectfont] at (-1.4,1.3) {Causal Canonical Models};
      	\node [inner sep=0] (omg) at (-1.4,1.05) {};
      	\node [inner sep=0] (omgspace) at (-1.4,0.5) {};
      	\path [-] (omg) edge (omgspace);
      	
      	\node at (-2.85,-0.4) {$\1M^*$};
      	\draw [fill=black] (-2.75,-0.7) circle (0.1);
      	\node [inner sep=0] (ms) at (-2.75,-0.7) {\ };

      	\draw [fill=black] (-1.5,-0.2) circle (0.1);
      	\node [inner sep=0] (mp) at (-1.5,-0.2) {};
      	\node at (-1.6,0.2) {$\widehat{\1M}$};
       
      	\draw [densely dotted] (-2.1,0.35) circle (0.1);
      	\draw [densely dotted] (-1.85,-0.15) circle (0.1);
      	\draw [densely dotted] (-1.15,-0.15) circle (0.1);
      	\draw [densely dotted] (-0.8,0.2) circle (0.1);
      	
      	\draw [densely dotted] (-3.0,0.4) circle (0.1);
        \draw [densely dotted] (-3.5,0.25) circle (0.1);
        \draw [densely dotted] (-3.25,0.0) circle (0.1);
        \draw [densely dotted] (-3.8, 0.0) circle (0.1);
        \draw [densely dotted] (-3.4,-0.4) circle (0.1);
        \draw [densely dotted] (-2.2,-0.8) circle (0.1);
        \draw [densely dotted] (-1.7,-0.7) circle (0.1);
      	\draw [densely dotted] (-2.4,0) circle (0.1);
      	\draw [densely dotted] (-1.1,0.5) circle (0.1);

      	\draw (-2.05,-2.72) ellipse (1.77 and 0.8);
        \draw [densely dotted] (-2.05,-2.72) ellipse (1.4 and 0.55);
      	\node [align=center, font=\fontsize{9}{0}\selectfont] (l1) at (-2,-3.8) {Observational Data};
      	\node at (-2.05,-2.8) {\small $P^*\!(\*V){=}\widehat{P}(\*V)$};
      	\draw [fill=black] (-2.05,-2.5) circle (0.05);
      	\node [inner sep=0.2em] (pl1) at (-2.05,-2.5) {\ };
    
      	\draw (2.8, 0.1) ellipse (1.4 and 0.7);
      	\node [align=center, font=\fontsize{9}{0}\selectfont] (l2t) at (2.9,-0.8) {Treatment Effect};
      	\draw [densely dotted] (2.8,0.1) ellipse (1.1 and 0.5);
      	\node [align=center] at (2.8,0.1) {{\small $\widehat{\3E}_{\*x}\left[Y \mid \*z\right]$}};
      	\draw [fill=black] (2.8,-0.3) circle (0.05);
      	\node [inner sep=0.2em] (pl21) at (2.8,-0.3) {\ };

      	\path [-Latex] (ms) edge (pl1);
      	\path [-Latex] (mp) edge [bend left = 15](pl1); 
            \path [-Latex] (mp) edge  node[above, font=\fontsize{9}{0}\selectfont]  {infer} (pl21);
            
      	\draw (2.0,-2.82) ellipse (1 and 0.6);
      	\node [align=center, font=\fontsize{9}{0}\selectfont] at (2.0,-3.8) {Causal Knowledge};
      	\draw [densely dotted] (2,-2.82) ellipse (0.8 and 0.45);
      	\node [align=center] at (2,-2.9) {$\1G = \widehat{\1G}$};
      	\draw [fill=black] (2,-2.7) circle (0.05);
      	\node [inner sep=0] (g1) at (2,-2.7) {};
      	\path [-Latex] (mp) edge [bend left = 15](g1);
      	\path [-Latex] (ms) edge (g1);

            \filldraw [fill=none,line width=0.01mm] (1.2, -1.1) rectangle (4.5, 1.1);
            \filldraw [fill=none,line width=0.01mm] (-4, -4) rectangle (3.5, -1.8);
            
            \node [align=center, font=\fontsize{8}{0}\selectfont, fill=white] (gconstrainttext) at (3.5, 1.1) {Posterior};
            \node [align=center, font=\fontsize{8}{0}\selectfont, fill=white] (gconstrainttext) at (2.5, -1.8) {Prior};

    \end{tikzpicture}
\caption{The illustration of the confounding-robust inference of the unknown treatment effect from the combination of observational data and causal knowledge.}\label{fig:bounding}
\end{figure}

The key to constructing canonical models is a partitioning of the latent exogenous domains. To formally introduce this procedure, we first describe some necessary notation. We will consistently use $\3U$ to denote a subset of the exogenous domain $\3R^m$, $\3X$ for a subset of the action domain $\3R^n$, and $\3Z$ for a subset of the covariate domain $\3R^l$. Fix an arbitrary integer $N \in \3N^+$. Let a sequence of constants $\*z_1, \dots, \*z_N$ in the covariate domain $\3R^l$, and let $\3U_1, \dots, \3U_N$ be disjoint subsets in the exogenous domain $\3R^m$. Similarly, let $\*x_1, \dots, \*x_N$ be a finite sequence of constants in the action domain $\3R^n$,  $y_1, \dots, y_N$ be constants in the reward domain $\3R$, and $\3X_1, \dots, \3X_N$ be disjoint subsets in the action domain $\3R^n$. Simple functions determining values of covariate $\*Z$, treatment $\*X$ and outcome $Y$ are given by, respecrtively,
\begin{align}
    &\widehat{f}_{\*Z}(\*u) = \sum_{i = 1}^{N} \*z_i \I_{\3U_{i}}(\*u), \label{eq:simple_z}\\
    &\widehat{f}_{\*X}(\*z, \*u) = \sum_{i = 1}^{N} \*x_i \I_{\3Z_{i}}(\*z) \I_{\3U_{i}}(\*u),  \label{eq:simple_x} \\
    &\widehat{f}_Y(\*x, \*z, \*u) = \sum_{i = 1}^{N} y_i \I_{\3X_i}(\*x)\I_{\3Z_i}(\*z) \I_{\3U_{i}}(\*u), \label{eq:simple_y}
\end{align}
where $\I_{\3U}$, $\I_{\3X}$ and $\I_{\3z}$ are the indicator functions of the subset $\3U_i \subseteq \3R^m$, $\3X_i \subseteq \3R^n$ and $\3Z_i \subseteq \3R^l$ respectively. A causal canonical model compatible with the qualitative assumptions of \Cref{fig:_mab_obs} is an SCM where simple functions determine the values of its observed variables. Formally,
\begin{definition}\label{def:ccm}
    A causal canonical model (CCM) $\1M$ is an SCM where values of its covariate $\*Z$, action $\*X$, and reward $Y$ are decided by simple functions $\widehat{f}_{\*Z}$, $\widehat{f}_{\*X}$ and $\widehat{f}_Y$ defined in \Cref{eq:simple_z,eq:simple_x,eq:simple_y} respectively.
\end{definition}
Let $\2M$ be the set of all SCMs compatible with the causal graph in \cref{fig:_mab_obs}. Similarly, let $\2N$ denote the space of CCMs compatible with \cref{fig:_mab_obs}. By the definition of CCMs in \cref{def:ccm}, the canonical space $\2N$ must be strictly contained in the original space $\2M$. We want to identify a subspace of causal models contained in $\2M$ with the following \textit{causal approximation property}, i.e.,
\begin{definition}[Causal Approximation Property]\label{def:approx}
    Let $\2M$ be the set of all SCMs described in \Cref{fig:_mab_obs}. A subset $\2N \subset \2M$ is said to satisfy the \emph{causal approximation property} if given any SCM $\1M \in \2M$ and any $\epsilon > 0$, there exists an alternative causal model $\widehat{\1M} \in \2N$ such that, for all bounded continuous functions $h: \3R^{n+l+1} \mapsto \3R$, the following conditions hold:
    \begin{align}
        &\left \lvert \3E \Brackets{h(\*X, \*Z, Y); \1M} - \3E \Brackets{h(\*X, \*Z, Y); \widehat{\1M}}  \right \rvert < \epsilon, \label{eq:approx_obs}\\
        &\int_{\3R^{n+l}} \left \lvert \invE{Y \mid \*z ; \1M}{\*x} - \invE{Y \mid \*z ; \widehat{\1M}}{\*x}  \right \rvert d\*z d\*x < \epsilon \label{eq:approx_do}
    \end{align}
    That is, the subset $\widehat{\1M}$ is \emph{dense} in the set of all SCMs $\1M$ with regard to (w.r.t.) the $L_1$ norm.
\end{definition}
One may surmise that since the parametric forms of simple functions are restrictive (compared to an arbitrary structural function), the family of causal canonical models is not sufficient in representing all the observational distributions (i.e., data) and treatment effects (query) in an arbitrary causal model. Our following result shows that this is not the case.
\begin{restatable}{theorem}{thmapprox}\label{thm:approx}
The set of CCMs $\widehat{\2M}$ is dense in the set of all SCMs $\2M$ compatible with \Cref{fig:_mab_obs}.
\end{restatable}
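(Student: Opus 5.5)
Given an SCM $\1M \in \2M$ with functions $f_Z, f_X, f_Y$ and exogenous distribution $P(\*U)$, and given $\epsilon > 0$, we construct $\widehat{\1M}$ that keeps the same exogenous distribution $P(\*U)$ and replaces each structural function by a simple function of the form \Cref{eq:simple_z,eq:simple_x,eq:simple_y}. Keeping $P(\*U)$ fixed means every discrepancy is caused by the functions alone. We then control the two conditions of \Cref{def:approx} separately. For \Cref{eq:approx_obs} we aim for convergence in probability (hence in distribution) of the observed triple. For \Cref{eq:approx_do} we aim for $L_1$ convergence of the map $(\*x,\*z)\mapsto \invE{Y\mid \*z}{\*x}$.

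\textbf{Step 1 (observational part).} Truncate to a large compact set. Choose $R$ such that $P(\|\*U\|>R)$, $P(\|\*Z\|>R)$, $P(\|\*X\|>R)$ and $P(|Y|>R)$ are all below $\epsilon/8$. By Lusin's theorem, the measurable maps $\*u\mapsto f_Z(\*u)$, $\*u\mapsto f_X(f_Z(\*u),\*u)$ and $\*u\mapsto f_Y(f_X(\cdot),\*u)$ are continuous on a compact $K\subseteq\{\|\*u\|\le R\}$ with $P(K^c)$ small. Hence they are uniformly continuous on $K$.

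Next, partition $K$ into finitely many cells $\3U_i$ of small diameter, and pick representatives $\*u_i\in\3U_i$. Set $\*z_i=f_Z(\*u_i)$ and $\*x_i=f_X(\*z_i,\*u_i)$. The sets $\3Z_i,\3X_i$ are chosen as small neighbourhoods (grid cells) containing $\*z_i$ and $\*x_i$, and $y_i$ is chosen accordingly. On $K$, the canonical values then differ from the true $(\*Z,\*X,Y)$ by at most $\delta$. Since a bounded continuous $h$ is uniformly continuous on a compact set, the bound $|\3E[h;\1M]-\3E[h;\widehat{\1M}]|<\epsilon$ follows by splitting the expectation over $K$ and $K^c$.

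One complication is that the index $i$ is shared across the three indicator products, so the sets $\3Z_i,\3X_i$ cannot be chosen independently. We handle this by refining the partition: each $i$ indexes a triple (a $\*u$-cell, a $\*z$-cell, an $\*x$-cell) in a product grid, so $N$ becomes the product of the grid sizes. On the residual set $K^c$, all functions are set to $0$ by default.

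\textbf{Step 2 (interventional part).} This is the main obstacle. The reason is that $\invE{Y\mid\*z}{\*x}=\3E[f_Y(\*x,\*U)\mid f_Z(\*U)=\*z]$ must be approximated for \emph{all} $\*x$, including actions never taken observationally. The natural choice is therefore to approximate $f_Y$ as a function of $(\*x,\*u)$ on the full product domain, not merely along the observed graph $\*x=f_X(\cdot)$.

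Concretely, we first approximate the target function $g(\*x,\*z)=\invE{Y\mid\*z}{\*x}$, which lies in $L_1(\3R^{n+l})$ by assumption, by a step function on finitely many boxes $\3X_j\times\3Z_k$ within $L_1$ error $\epsilon/2$. This uses density of simple functions on rectangles in $L_1$. We then need the canonical model to reproduce, for each box, a conditional mean $\approx g_{jk}$, while still matching the observational law from Step 1.

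To do this, we split each exogenous cell $\3U_i$ (whose $\*z$-value lies in $\3Z_k$) into refined subcells, and on them define $\widehat f_Y(\*x,\*z,\*u)=\sum y_{ijk}\I_{\3X_j}(\*x)\I_{\3Z_k}(\*z)\I_{\3U_i}(\*u)$. Here $y_{ijk}$ approximates $f_Y(\*x_j,\*u_i)$ on boxes away from the observed action $\*x_i$. On the box containing $\*x_i$, the value is fixed by Step 1, and this agrees in the limit because of uniform continuity.

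Averaging over $\*u$ given $\*z\in\3Z_k$ then yields $\approx g_{jk}$ uniformly on bounded regions. This relies on a continuity assumption on $f_Y$ in $\*x$, or else on a second Lusin argument in the joint variable $(\*x,\*u)$ with respect to Lebesgue $\times P(\*U)$. The tail of $\3R^{n+l}$ outside the bounded region is handled by setting $\widehat f_Y=0$ there, at $L_1$ cost below $\epsilon/4$ by integrability of $g$.

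The delicate point we expect is that conditioning on $\*Z=\*z$ in $\widehat{\1M}$ collapses to conditioning on $\*Z\in\3Z_k$, since $\widehat{\1M}$ has discrete $\*Z$. Strictly, $\widehat{\1M}$'s conditional effect is only defined at atoms, so we will interpret $\invE{Y\mid\*z;\widehat{\1M}}{\*x}$ as the cellwise conditional mean extended piecewise-constantly. The first thing to try is to show that these cellwise conditional means converge to $g$ in $L_1$ as the mesh shrinks, by a Lebesgue-differentiation or martingale argument on the $\*z$-partition. Combining the error terms then gives \Cref{eq:approx_do}.
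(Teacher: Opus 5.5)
Your Step~1 is fine. It is a legitimate variant of the paper's \Cref{prop:equ}: you partition the exogenous domain via Lusin's theorem, while the paper discretizes the ranges of $\*X,Y,\*Z,Y_{\*x}$ dyadically and refines the preimages, which needs no continuity. (Your mesh depends on $h$, so what you obtain is weak convergence of a sequence of CCMs. That is the only tenable reading of \Cref{eq:approx_obs}, since a finitely supported law cannot be $\epsilon$-close to a non-atomic one simultaneously for all bounded continuous $h$.)

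The genuine gap is Step~2, which carries the whole interventional claim. You take $y_{ijk}\approx f_Y(\*x_j,\*u_i)$ and assert that averaging over $\*u$ given $\*z\in\3Z_k$ yields $\approx g_{jk}$ ``uniformly on bounded regions''. That average is $\3E[f_Y(\*x_j,\*U)\mid f_Z(\*U)\in\3Z_k]$, a $P_{\*Z}$-weighted average of $g(\*x_j,\cdot)$ at a single representative action. By contrast, $g_{jk}$ is the value of a Lebesgue $L_1$ step approximation. Nothing links the two without continuity of $f_Y$ in $\*x$, which is not assumed. For example, if $f_Y(\*x,\*u)=c$ at box centers and $0$ elsewhere, then $g=0$ a.e.\ while cell means built from centers equal $c$.

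Your fallbacks do not close this. A joint Lusin argument only makes $\widehat f_Y$ close to $f_Y$ off a small exceptional set. Turning that into closeness of conditional means requires local integrability of $f_Y$ against Lebesgue$\times P(\*U)$, which does not follow from integrability of $g$ because of cancellations. The ``Lebesgue-differentiation or martingale'' step, left as ``the first thing to try'', mixes measures. Martingale convergence gives convergence $P_{\*Z}$-a.e.\ or in $L_1(P_{\*Z})$, whereas \Cref{eq:approx_do} measures error against Lebesgue measure in $\*z$. Where $P_{\*Z}$ is singular or vanishes, the cell means say nothing about $g$. Even with a density, $P_{\*Z}$-weighted cell averages are not contractions in $L_1(d\*z)$.

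The missing idea, which is how the paper reconciles the two requirements, is that in a CCM the observed action $\widehat{\*X}$ takes only finitely many values. Observational consistency therefore pins down $\widehat f_Y$ only on a Lebesgue-null set of actions. Give each observed action $\*x_i$ its own singleton cell carrying the Step~1 value $y_i$, and define $\widehat f_Y$ on $\3X_j\setminus\{\*x_1,\dots,\*x_N\}$ separately. Tying $y_i$ to the whole box containing $\*x_i$, as you did, is what forced your continuity appeal there.

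Since \Cref{eq:approx_do} involves only conditional means, nothing forces the off-observation values to track $f_Y$. Setting $y_{ijk}=g_{jk}$ makes the effect in $\widehat{\1M}$ equal your step function off a null set of actions, so \Cref{eq:approx_do} reduces to the $L_1$ step approximation you already have. The paper instead samples $g$ at suitably chosen representatives (\Cref{prop:eqx}) and approximates the finitely many $Y_{\*x_j}$ by exogenous discretization (\Cref{prop:equ}); the decoupling is the same. No continuity, joint Lusin, or differentiation argument is then needed.

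What remains is the convention for conditioning on non-atoms of $\widehat{\*Z}$, which you rightly flagged and the paper leaves implicit. Under your piecewise-constant convention, every $\3Z_k$ on which the step function is nonzero must contain an atom of $\widehat{\*Z}$.
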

\Cref{thm:approx} says that for an arbitrary SCM $\1M$, there exists a causal canonical model $\widehat{\1M}$ that converges to $\1M$ in the observational distribution $P(\*X, \*Z, Y)$. For example, let $h$ be the product of indicator functions $h(\*x,\*z, y) = \I_{\*x \leq \*x'}\I_{\*z \leq \*z'}\I_{y \leq y'}$ for constants $\*x' \in \3R^n$, $\*z' \in \3R^l$ and $y \in \3R$. Among the equations above, the first term ensures that the cumulative observational distribution $P\Parens{\*X \leq \*x', \*Z \leq \*z', Y \leq y'}$ induced by the canonical model $\widehat{\1M}$ converges to the same distribution function in the causal model of ground truth $\1M$. In addition, the second term ensures that the canonical model $\widehat{\1M}$ could approximate the treatment effects $\invE{Y \mid \*z}{\*x}$ for every treatment $\*x \in \3R^n$ and context $\*z \in \3R^l$ with arbitrary precision.

\begin{example}\label{exp:_2} 
Consider the causal model $\1M$ described in \Cref{exp:_1}. We will construct a canonical model $\widehat{\1M}$ to approximate the observational distribution $P(X, Y)$ and treatment effects $\invE{Y}{x}$, $\forall x \in [-2, 2]$. The idea of the construction is to define a simple function $\widehat{f}_X$ determining values of treatment $X$ over the subintervals of the exogenous domains. For each integer $k \in \3Z$, we will decompose the exogenous domain $[-2, 2]$ into $2^{k+2}$ equally sized disjoint intervals. Specifically, we define $\3U_i(k)$ as a subinterval contained in $[-2, 2]$ given by,  for $i = 1, \dots, 2^{k+2}$,
\begin{align}
    \3U_i(k) = \Braces{u: -2 + \frac{i - 1}{2^{k}} \leq u < -2 + \frac{i}{2^{k}}} \label{eq:_4_3_1}
\end{align}
The function $\widehat{f}_X(u)$ is a linear combination of indicator functions over $\3U_i(k)$ given by, for $i = 1, \dots, 2^{k+2}$,
\begin{align}
    &\widehat{f}_X(u) = \begin{dcases}
    \frac{1}{2}\Parens{2 - \frac{i - 1}{2^k}} &\mbox{for }u \in \3U_i(k)
    \end{dcases}
\end{align}
Similarly, values of the outcome $Y$ are determined by a simple function $\widehat{f}_Y$ defined over the product of subintervals of the exogenous and treatment domains. Let $\3X_i(k)$ be a subinterval contained in $[-2, 2]$ defined as, for $i = 1, \dots, 2^{k+2}$,
\begin{align}
    \3X_i(k) = \Braces{x: -2 + \frac{i - 1}{2^{k}} \leq x < -2 + \frac{i}{2^{k}}}
\end{align}
The function $\widehat{f}_Y(u)$ is a linear function over the product of subintervals $\3U_i(k)$ and $\3X_i(k)$, with additional observed actions to ensure the canonical model is consistent in observational distribution. Formally, for $i,j = 1, \dots, 2^{k+2}$,
 \begin{align}
    &\widehat{f}_Y(x, u)\\
    &=\begin{dcases}
         \frac{3}{4}\Parens{-2 + \frac{j - 1}{2^k}}^2, \mbox{if } x = \widehat{f}_X(u) \text{ and } u \in \3U_j(k)\\
         -\Parens{-2 + \frac{i - 1}{2^k}}^2 + \Parens{-2 + \frac{j - 1}{2^k}}^2, \\
         \;\;\;\;\;\;\;\;\;\;\;\;\;\;\;\;\;\;\;\;\;\;\;\;\;\;\;\;\;\;   \mbox{if } x \in \3X_i(k) \text{ and } u \in \3U_j(k)
    \end{dcases} \notag
\end{align}
\end{example}
\begin{wrapfigure}[18]{r}{0.55\linewidth}
\centering
\vspace{-0.15in}
\includegraphics[width=\linewidth]{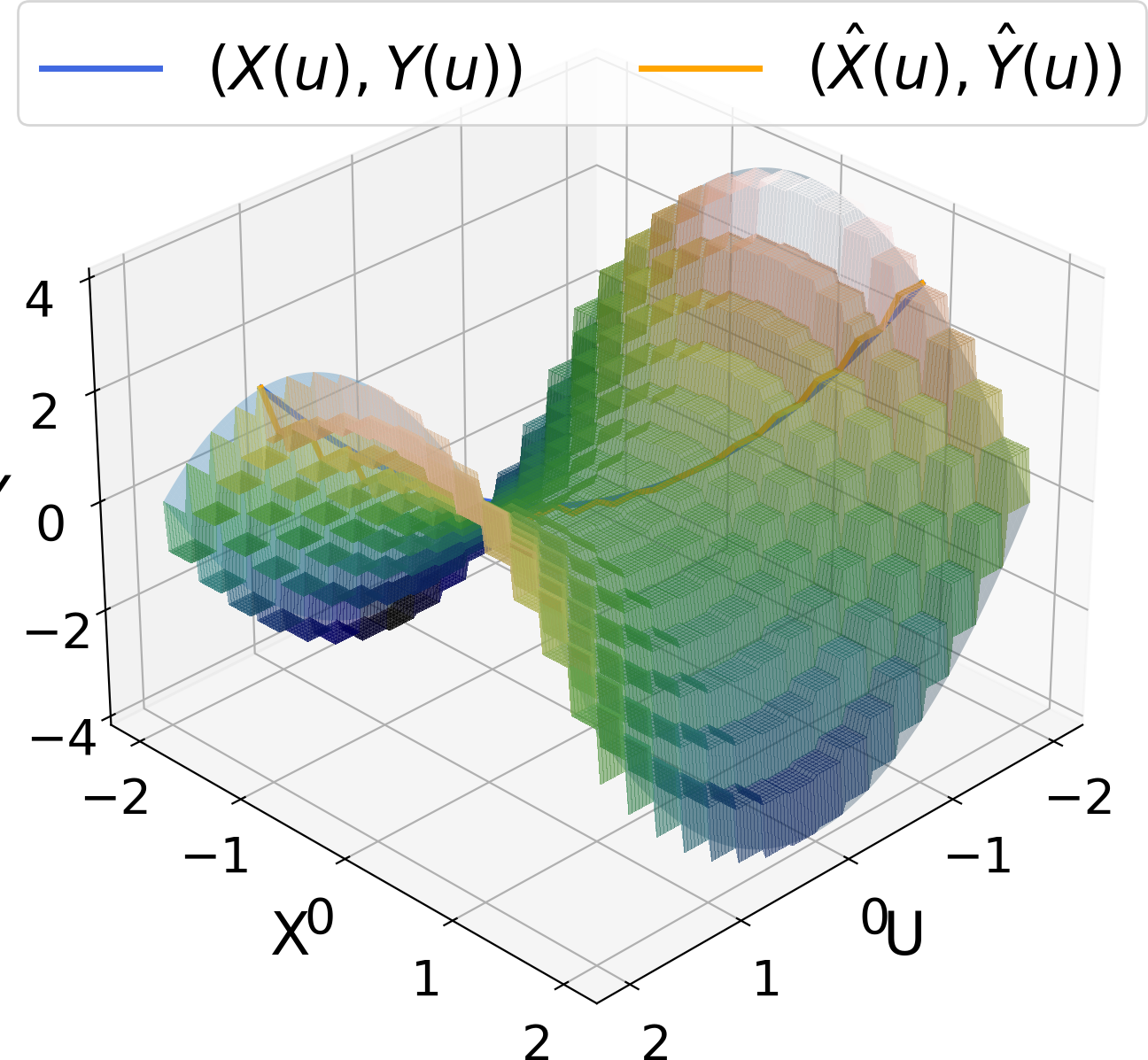}
  \caption{A simple function approximating the reward function $f_Y(x, u)$ in the ground-truth causal model $\1M$ of \Cref{exp:_1}. The observed trajectories and their approximations are highlighted in \textcolor{blue}{blue} and \textcolor{orange}{orange}.}
  \label{fig:_3_3_fy}
\end{wrapfigure}
Among the above equations, the first condition in $\widehat{f}_Y(x, u)$ ensures that the outcome variable $\widehat{Y}(u)$ in the canonical model $\widehat{\1M}$ effectively approximates the observation $Y(u)$ in the causal model $\1M$. It follows from the constructions of canonical observations $\widehat{X}(u)$ and $\widehat{Y}(u)$ that they converge to the observed variables $X(u) $ and $Y(u)$ in probability. We show in \Cref{fig:_3_3_fy} the graphical representation of the simple reward function $\widehat{f}_Y$ for $k = 5$, where the ground-truth reward function $f_Y$ is shown in a blue surface. We also highlight the observed trajectories $\left( X(u), Y(u)\right)$ in the causal model $\1M$ and the canonical model $\widehat{\1M}$ in \textcolor{blue}{blue} and \textcolor{orange}{orange} respectively. One could see by inspection that the observational distribution $P(\widehat{X}, \widehat{Y})$ in the canonical model $\widehat{\1M}$ must converge to the observational distribution $P(X, Y)$ in the causal model $\1M$ as the cardinality $k$ increases.

In terms of approximation of treatment effects, the definitions of $\widehat{f}_Y(x, u)$ ensure that for actions $x \neq \widehat{f}_X(u)$, the constructed $\widehat{Y}_x(u)$ converges to the original potential outcome $Y_x(u)$ in probability. This means that for actions $x \neq 1 - (i-1)2^{-k-1}, i = 1, \dots, 2^{k+2}$, the treatment effect $\widehat{\3E}_x[Y]$ in the canonical model $\widehat{\1M}$ converges to $\invE{Y}{x}$ in the original model $\1M$. Since the construction of $\widehat{\1M}$ only adds a finite number of points to the equivalence class of the action domain, these points have Lebesgue measure zero and do not affect the $L_1$ approximation error. $\hfill \blacksquare$
\section{Generative Modeling for Partial Causal Identification}
While the representation of causal canonical models is expressive enough to capture all possible observations and interventions, it remains challenging to infer target causal effects from confounded observations. The goal of this section is to overcome this challenge by developing a novel generative modeling approach for evaluating non-identifiable causal effects from confounded observations. We first introduce a reparametrization of the causal canonical models using continuous functions. Based on the reparametrization, we then discuss a novel partial identification method for evaluating causal effects via variational inference. 

\begin{figure*}[t]
\hfill
        \begin{subfigure}{0.25\linewidth}\centering
            \setlength{\abovecaptionskip}{0pt}
		\includegraphics[width=\linewidth]{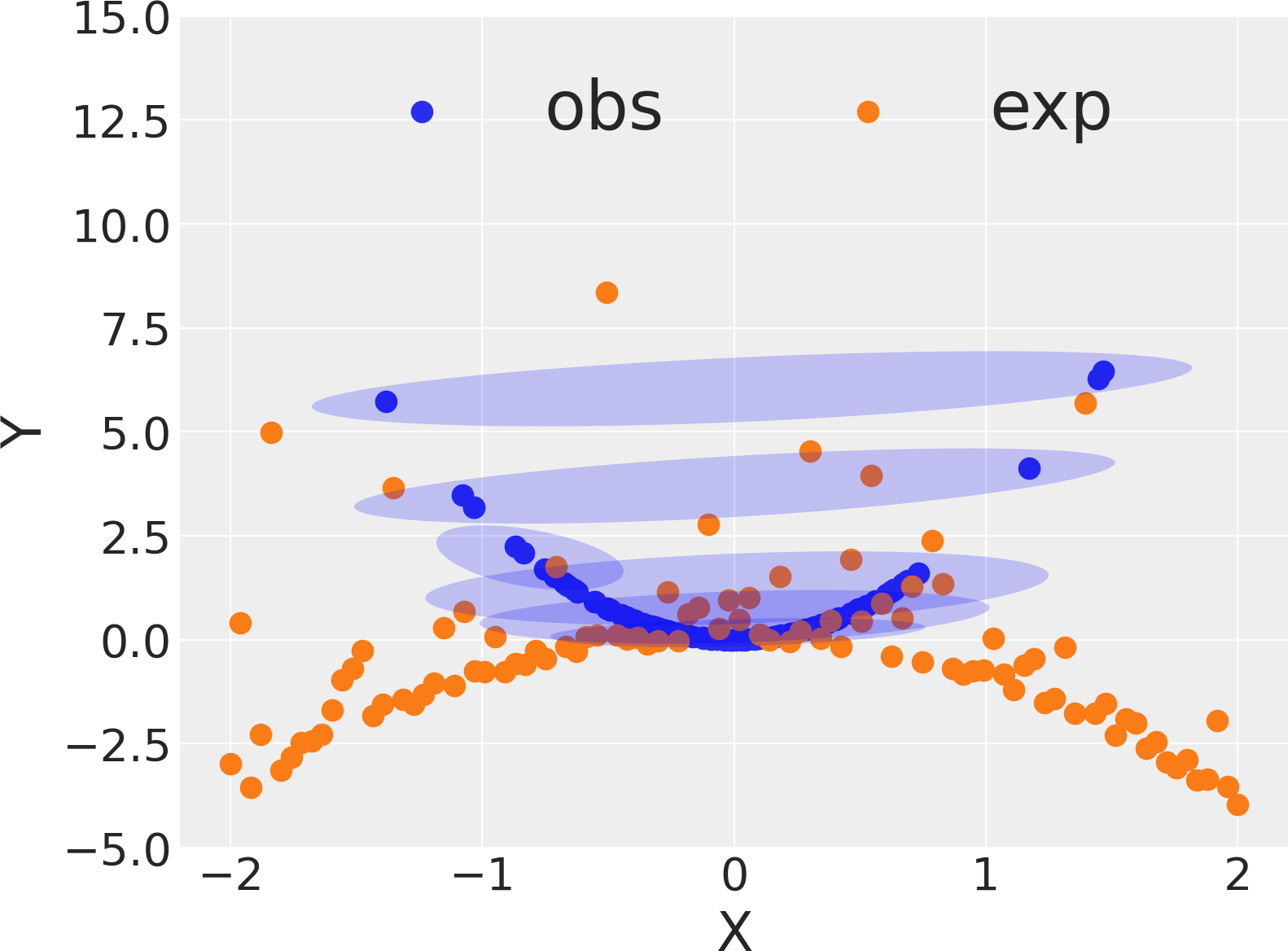}
		\caption{}
		\label{fig:_4_1_a}
	\end{subfigure}\hfill
        \begin{subfigure}{0.25\linewidth}\centering
            \setlength{\abovecaptionskip}{0pt}
		\includegraphics[width=\linewidth]{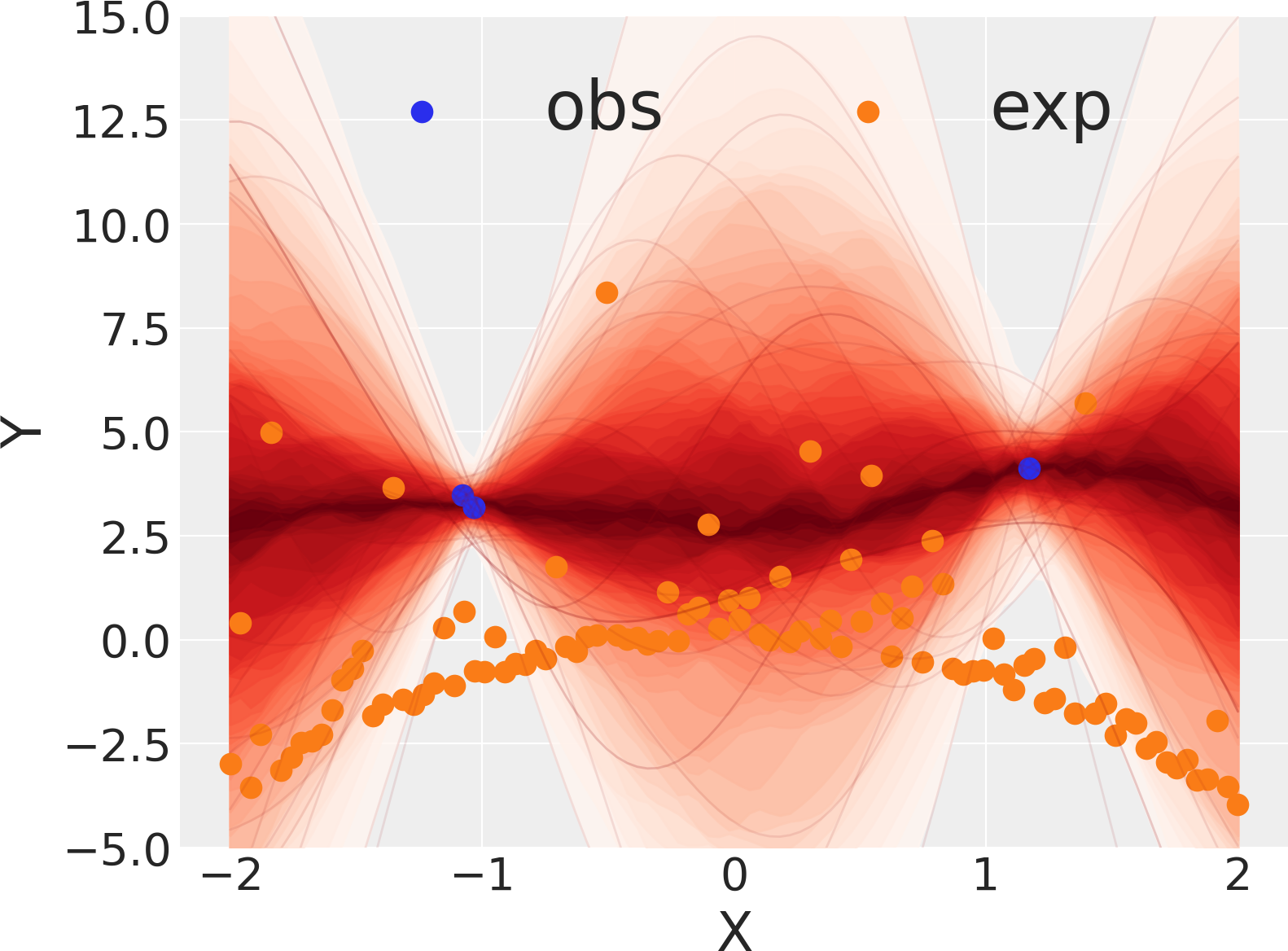}
		\caption{}
		\label{fig:_4_1_b}
	\end{subfigure}\hfill
        \begin{subfigure}{0.25\linewidth}\centering
            \setlength{\abovecaptionskip}{0pt}
		\includegraphics[width=\linewidth]{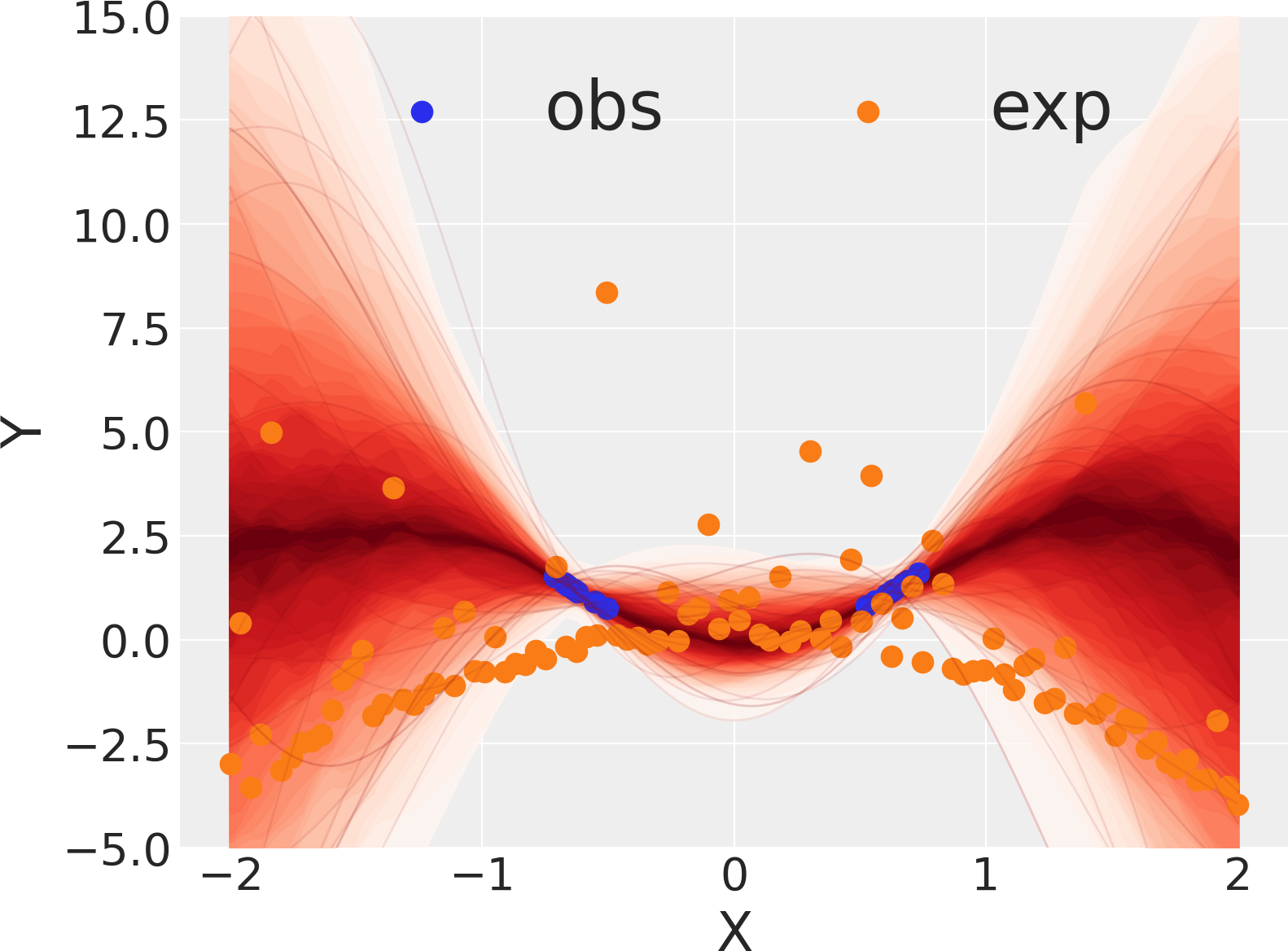}
		\caption{}
		\label{fig:_4_1_c}
	\end{subfigure}\hfill
        \begin{subfigure}{0.25\linewidth}\centering
            \setlength{\abovecaptionskip}{0pt}
		\includegraphics[width=\linewidth]{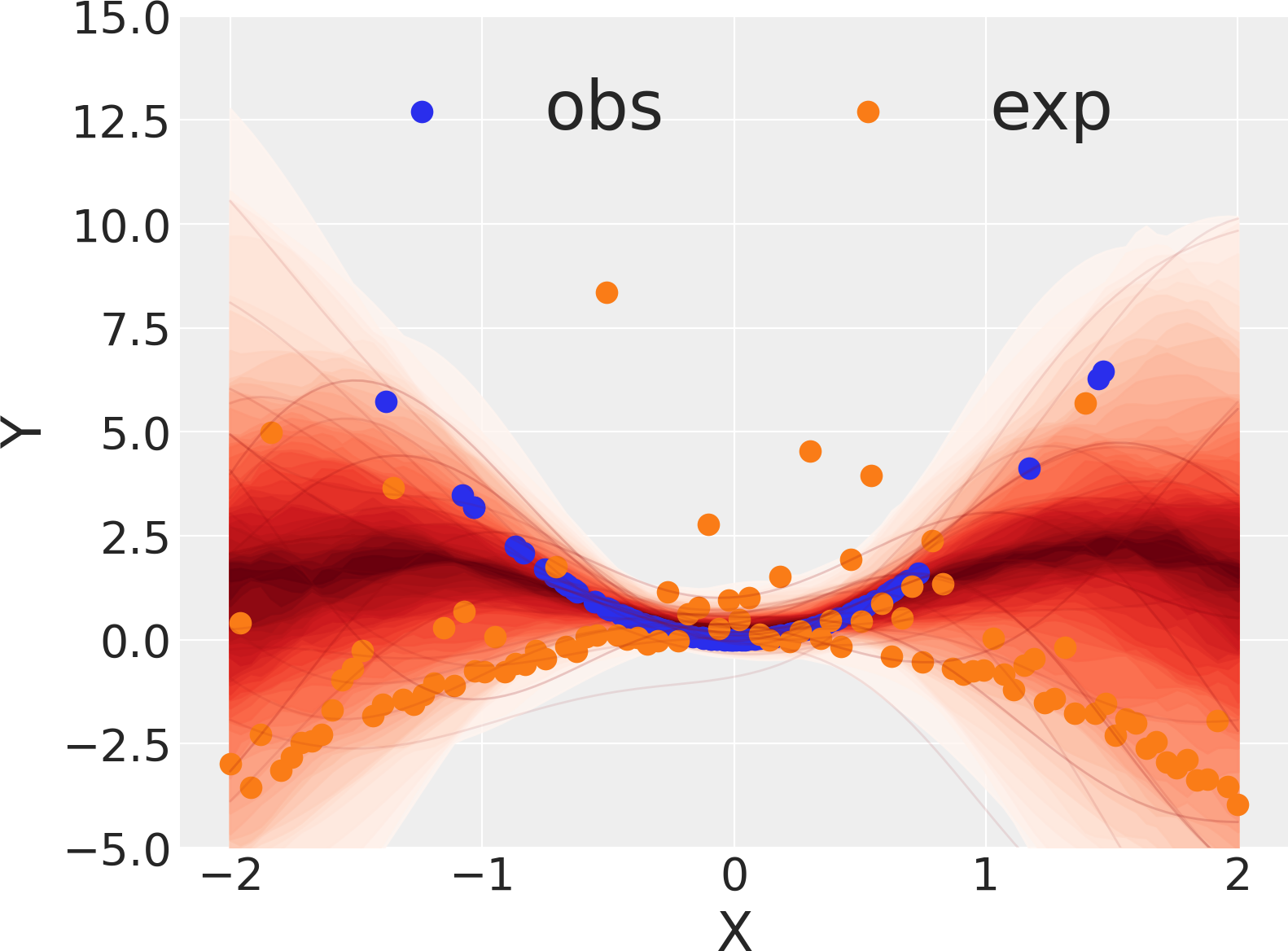}
		\caption{}
		\label{fig:_4_1_d}
	\end{subfigure}\hfill\null
	\caption{(\subref{fig:_4_1_a}) Stratified observed data based on the assigned functional types; (\subref{fig:_4_1_b}, \subref{fig:_4_1_c}) posteriors over selected canonical functions learned over stratified observations; (\subref{fig:_4_1_d}) the posterior over the treatment effect conditioning on observed data.}
	\label{fig:_4_1}
\end{figure*}
\subsection{Approximation with Continuous Functions}
For a canonical model in \cref{def:ccm}, the exogenous domain is discretized, and simple functions decide the values of action and reward. Note that every simple function could be approximated by a continuous function almost everywhere \citep[Ch.~1]{stein2009real}, leading to further simplification. Henceforth, we will refer to this simplified canonical model as the causal generative model. Formally,  
\begin{definition}\label{def:cgm}
    A causal generative model (CGM) $\1M$ is an SCM where exogenous variables $\*U = \{U\}$ are drawn from a \emph{discrete distribution} $P(U)$ over a finite domain $\{1, \dots, d\}$. For every unit $U = u$, values of covariates $\*Z$ and treatment $\*X$ are drawn from conditional distributions $P(\*Z \mid u)$ and $P(\*X \mid \*z, u)$ respectively; values of outcome $Y$ are decided by a \emph{continuous function} $h_u: \3R^{n+l} \mapsto \3R$ mapping from the treatment-context domain $\3R^{n+l}$ to reward domain $\3R$.
\end{definition}
Let $\2N$ denote the family of all causal generative models compatible with \Cref{fig:_mab_obs}. It follows immediately from \Cref{thm:approx} that the causal approximation property (\Cref{def:approx}) also holds for CGMs.
\begin{restatable}{theorem}{thmcgm}\label{thm:cgm}
    The set of CGM $\2N$ is dense in the set of all SCMs $ \2M$ compatible with \Cref{fig:_mab_obs}.
\end{restatable}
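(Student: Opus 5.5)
We reduce \Cref{thm:cgm} to \Cref{thm:approx} by a two-step approximation and the triangle inequality. Fix an SCM $\1M \in \2M$ and $\epsilon > 0$. By \Cref{thm:approx}, there is a CCM $\widehat{\1M}$ with simple functions $\widehat{f}_{\*Z}, \widehat{f}_{\*X}, \widehat{f}_Y$ built on disjoint exogenous cells $\3U_1, \dots, \3U_N$ such that \Cref{eq:approx_obs,eq:approx_do} hold with $\epsilon/2$. It then suffices to build a CGM $\tilde{\1M} \in \2N$ that is $\epsilon/2$-close to $\widehat{\1M}$ in both senses.

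\textbf{Discretizing the exogenous domain.} In $\widehat{\1M}$, the observed variables depend on $\*u$ only through the index $i$ of the cell $\3U_i$ containing $\*u$. (Add the complementary cell $\3U_0 = \3R^m \setminus \bigcup_i \3U_i$ if needed; on it all simple functions equal $0$.) We therefore define a discrete exogenous variable $U \in \{0, \dots, N\}$ with $P(U = i) = P(\*U \in \3U_i)$. Given $U = i$, we set $\*Z = \*z_i$ deterministically, which is a point-mass $P(\*Z \mid i)$, and $\*X = \*x_i \I_{\3Z_i}(\*z_i)$, which is a point-mass $P(\*X \mid \*z, i)$. This coupling reproduces exactly the joint law of $(\*X, \*Z)$ and of the index. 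The outcome map in cell $i$ is the simple function $g_i(\*x, \*z) = y_i \I_{\3X_i}(\*x)\I_{\3Z_i}(\*z)$, so $P(\*X, \*Z, Y)$ and $\invE{Y \mid \*z}{\*x}$ under $\widehat{\1M}$ coincide with those of this discrete-$U$ model.

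\textbf{Replacing simple functions by continuous ones.} The only gap to \Cref{def:cgm} is that each $g_i$ is not continuous. We will replace each $g_i$ by a continuous $h_i$, controlling two errors.

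For the interventional term, $\invE{Y \mid \*z}{\*x} = \sum_i g_i(\*x, \*z) P(U = i \mid \*z)$ in the submodel. Here $P(U = i \mid \*z)$ is a point mass on the finitely many values $\*z_i$, so we read the conditional expectation as defined in the CCM on the relevant $\*z$, with weights $w_i(\*z)$ bounded by $1$. The $L_1$ error is then at most $\sum_i \lVert g_i - h_i \rVert_{L_1}$. Assuming the sets $\3X_i, \3Z_i$ are measurable with finite measure (or first truncating to a large box, which costs little because the target effect is Lebesgue integrable), density of $C_c$ in $L_1$ gives $h_i$ with $\lVert g_i - h_i\rVert_{L_1} < \epsilon/(4(N+1))$.

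For the observational term, we need $h_i(\*x_i, \*z_i) = g_i(\*x_i, \*z_i) = y_i$ at the single observed point of cell $i$. Then $Y$ has exactly the same law under the observed regime, and the error in \Cref{eq:approx_obs} is zero. Because a point has measure zero, we can modify $h_i$ on a tiny neighborhood of $(\*x_i, \*z_i)$ using a continuous bump that forces the value $y_i$. The bump has support of arbitrarily small volume and height bounded by $|y_i| + \sup|h_i|$, so the $L_1$ cost can be made below $\epsilon/(4(N+1))$.

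The main obstacle is the measure-theoretic bookkeeping in the interventional term. Two issues arise: conditional expectations given $\*z$ under degenerate covariate distributions, and the possibly unbounded domains $\3X_i, \3Z_i$. I would handle both by first truncating to a compact box, where the tail integral of $\invE{Y\mid\*z}{\*x}$ is below a small constant, and by fixing the interventional formula as the CCM's own pointwise expression $\sum_i w_i(\*z) g_i$. After that, the approximation is routine via Lusin's theorem or $C_c$-density in $L_1$, together with the triangle inequality $\epsilon/2 + \epsilon/2$.
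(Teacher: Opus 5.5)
Your proposal is correct and follows essentially the paper's own argument: reduce to \Cref{thm:approx}, reproduce the CCM's per-cell constants $(\*x_i,\*z_i)$ with point-mass conditionals $P(\*X,\*Z\mid u)$ for a discrete $U$ indexing the exogenous cells, and replace each cell's step reward function by a continuous $L_1$-approximation, as the paper does. Your bump step, which pins $h_i$ to the CCM's observed reward at the single support point of cell $i$, is a worthwhile addition: the paper's one-line appeal to continuous approximation leaves it unaddressed, and an $L_1$ approximation alone says nothing about $h_i$ on that Lebesgue-null point. Note only that this point is $(\*x_i\I_{\3Z_i}(\*z_i), \*z_i)$, and the value to enforce there is $g_i$ evaluated at that point, which need not equal $y_i$.
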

\Cref{thm:cgm} describes a novel family for generative models reproducing the observations and treatment effects in any SCM. The underlying population is stratified into a finite number of unknown reward functional types $h_u$ mapping from the domain of treatment $\*X$ and covariate $\*Z$ to reward $Y$. For every unit $U = u_i$, the nature reveal a partial context $\*z \sim P(\*Z \mid u_i)$, samples an observed action $\*x_i \sim P(\*X \mid \*z, u_i)$, performs an intervention following this action to unit $u_i$, and receives a subsequent reward $y_i \gets h_{u_i}(\*x_i)$. Fix a treatment $\*x \in \3R^n$ and a context $\*z \in \3R^l$. The treatment effect $\invE{Y \mid \*z}{\*x}$ is obtained by averaging the set of all potential reward functions $h_{u_i}$, weighted by the posterior exogenous probabilities $P(u_i \mid \*z)$. Formally, 
\begin{align}
    \invE{Y \mid \*z}{\*x} &= \sum_{u = 1}^d h_u(\*x, \*z)P(u \mid \*z) \label{eq:te}
\end{align}
The following example demonstrates the generative process of CGMs, and how it enables one to obtain a robust treatment effect evaluation from confounded observations.
\begin{example}\label{exp:_cgm}
    Consider the causal model $\1M$ described in \Cref{exp:_2}. Let $\3C_i(k) = \3U_i(k) \cup \3U_{2^{k+2}-i}(k)$, $i = 1, \dots, 2^{k+1}$ where $\3U_i(k)$ is defined in \Cref{eq:_4_3_1}. It follows from the discretization described in \Cref{exp:_2} that exogenous units $u\in \3C_i(k)$ where share the same reward function, i.e., $h_i^{(k)}(x) = -x^2 + \Parens{-2 + \frac{i - 1}{2^k}}^2$, for $u \in \3C_i(k), i = 1, \dots, 2^{k+1}$. \Cref{fig:_4_1_a} shows a partition of the observed data according to the type of associated function $h_i^{(k)}(x)$ for $k = 2$; samples generated by the same reward function are highlighted in the same color. For each $u$, the observed actions $x$ are drawn from a normal distribution $P(X|u)$. 
    
    We next apply standard Gaussian process regression to learn the canonical functions $h_i^{(k)}$ using the corresponding observed data; \Cref{fig:_4_1_b,fig:_4_1_c} show selected posteriors learned from this process. We also compute the posterior over the treatment effect $\invE{Y}{x}$ by averaging the learned posteriors over each canonical function $h_i^{(k)}(x)$, weighted by probabilities $P\Parens{U \in \3C_i(k)}$ assigned to each partition. The simulation result (\Cref{fig:_4_1_d}) shows that (1) the learned function $h_i^{(k)}(x)$ picks up behaviors of the real reward function $f_Y$ in the corresponding partition; (2) the learned posterior over $\invE{Y}{x}$ generalizes well to internveitonal data sampled from the ground-truth causal model. $\hfill \blacksquare$
\end{example}

\subsection{Causal Gaussian Processes}
\Cref{exp:_cgm} demonstrates that the membership to the same reward stratum $\3C_i(k)$ is a sufficient statistic satisfying the backdoor criterion \citep{pearl:95}. This allows one to estimate the treatment effect by adjustment on strata $\3C_i(k)$. If the propensity score $P(x|u) > 0$ has full coverage, one could eventually identify the ground-truth effect as the number of observed samples increases. On the other hand, the canonical functional stratification $\3C_i(k)$ is generally underdetermined by the confounded observations. To obtain a posterior over the target causal effect from confounded observations, it is thus essential to search over such potential partitioning of the exogenous domains and their functional assignments. We will next introduce a novel family of augmented Gaussian process models, called Causal GP, to carry out this learning procedure to obtain robust posteriors. 

More specifically, let $\*\theta$ denote the model parameters of the distributions (e.g., $P(\*u)$) and functions ($h_u$) in CGMs defined in \Cref{def:cgm}, and let $\theta_{\*x}(\*z)$ denote the values of a treatment effect $\invE{Y\mid \*z}{\*x}$ of action $\*X \gets \*x$ in context $\*Z = \*z$. Provided a set of finite samples $\1D = \Braces{(\*x_i, y_i, \*z_i)}_{i = 1}^N$, we are interested in computing the posterior distribution over a sequence of treatment effects $\1Q = \Braces{\theta_{\*x^*_i}(\*z^*_i)}_{i = 1}^{N^*}$ with treatment and context assignments $\Braces{(\*x^*_i, \*z^*_i)}_{i = 1}^{N^*}s$. The learner assumes access to a prior distribution over model parameters $\*\theta$ of candidate CGMs, the details of which are provided below. 

We assume the exogenous probabilities $P(u)$ are drawn from a truncated Dirichlet process, determining the total number of reward functional types and their assigned weights. A mental image of such a distribution follows a stick-breaking process \citep{sethuraman1994constructive}, which successively breaks off pieces from a unit-length stick with sizes proportional to random draws from a Beta distribution. Formally, for all $u = 1, \dots, d - 1$, 
\begin{align}
    P(u) = \rho_{u} \prod_{i = 1}^{u-1} (1 - \rho_i) \label{eq:dp}
\end{align} 
where $\rho_{u} \sim \texttt{Beta}\left (\alpha_{u}, \beta_{\*u} \right)$; hyperparameters $\alpha_{u}, \beta_{u} > 0$. Finally, we truncate this construction by setting $\rho_{u} = 1$.\footnote{In practice, the cardinality $d$ is bounded by the total number of observational samples and target queries $N + N^*$ to avoid overparametrization.} It follows from \Cref{eq:dp} that all probabilities $P(u)$ for $u > d$ are equal to zero.

Fix a unit $u$. Values of covariate $\*Z$ are drawn from a multivariate normal distribution $P(\*Z | u)$. That is, 
\begin{align}
    &\*z \sim \texttt{Normal}\left (\*\mu_{u}, \*\Sigma_{u} \right) &&\text{ for every } u = 1, \dots, d, 
\end{align}
where $\*\mu_u \in \3R^l$ is a $l$-dimensional mean vector and $\*\Sigma_{\*u} \in \3R^{l \times l}$ is a covariance matrix. For any context $\*z$, values of treatment $\*X$ are drawn from a multivariate normal distribution $P(\*X |\*z, u)$. That is, for any $u = 1, \dots, d$ and $\*z \in \3R^l$,
\begin{align}
    &\*x \sim \texttt{Normal}\left (\*\mu_{u}(\*z), \*\Sigma_{u}(\*z) \right), 
\end{align}
where $\*\mu_u(\*z) \in \3R^n$ and $\*\Sigma_{u}(\*z) \in \3R^{n\times n}$ are, respectively, the mean vector and covariance matrix. 

For any unit $u$, values of outcome $Y$ are decided by the function $h_u(\*x, \*z)$ mapping from the treatment-covariate domain to the primary outcome, the parameters of which are drawn from a Gaussian process. Formally, 
\begin{align}
      &\text{for every } u = 1, \dots, d, \;\; y \gets h_u(\*x, \*z), \\
      &h_u(\*x, \*z) \sim \texttt{GP}\left (m_{u}(\*x, \*z), k_{u}(\*x, \*z, \*x', \*z') \right), \label{eq:gp}
\end{align}
where $m_{u}(\*x, \*z) = \3E \Brackets{h_u(\*x, \*z)}$ is the expected function value given input $\*x, \*z$, and the covariance function $k_{u}(\*x,\*z, \*x', \*z')$ represents the correlation between function values at different input points $(\*x, \*z)$ and $(\*x', \*z')$, i.e., $k_{u}(\*x, \*z, \*x', \*z') = \3E \Brackets{\Parens{h_u(\*x, \*z) - m_{u}(\*x, \*z)}\Parens{h_u(\*x', \*z') - m_{u}(\*x', \*z')}}$. In our applications, we often set the prior mean function to $m_{u}(\*x, \*z) = \*0$ to avoid expensive posterior computations. The function $k_{u}$ is the kernel of the Gaussian process \citep{micchelli2006universal}. One very popular choice is the radial basis function kernel, which is defined as $k_{u}(\*x, \*z, \*x', \*z') = \sigma_{u}^2 \exp \Parens{- \frac{\lVert (\*x, \*z) - (\*x', \*z') \rVert^2}{2 \lambda_{u}^2}}$. Among these quantities, $\lambda_{u}$ and $\sigma_{u}^2$ are hyperparameters which can be varied to increase or reduce the correlations between points and the variability of the resulting function. \footnote{Notably, the choice of the kernel function $k_u$ injects the induction bias with regard to the smoothness and support of the underlying structural functions. Our simulation results show that the radial basis kernel performs well for partial identification tasks.}

Computing the exact derivation of the posterior distribution $P(\*\theta \mid \1D)$ is intractable. Variational inference (VI) approximates the posterior by constructing an optimization program \citep{blei2017variational}.  Consider a family of approximating densities of the latent model parameters $q(\*\theta ; \*\phi)$, parameterized by a vector $\*\phi \in \Phi$. VI finds the parameters $\*\theta^*$ that maximize the evidence lower bound (ELBO):
\begin{align}
    \1L(\*\phi) = \3E_{q(\*\theta)}\Brackets{\log{P(\*\theta, \1D)}} -\3E_{q(\*\theta)}\Brackets{\log{q(\*\theta ; \*\phi)}} \label{eq:elbo}
\end{align}
Among the above quantities, the first term is an expectation of the joint density under the approximation, and the second is the entropy of the variational density. The optimized $q(\*\theta ; \*\phi^*)$ then serves as an approximation to the posterior. Given model parameters drawn from the approximation, one could then obtain posterior samples of the target query $\theta_{\*x_k}(\*z_k)$ following the computation in \Cref{eq:te}.

Several approximation frameworks and algorithms have been proposed to solve the VI problem in \Cref{eq:elbo} \citep{kingma2013auto,blei2017variational,liu2016stein,kucukelbir2017automatic,roeder2017sticking}. In this paper, we mainly utilize the classic mean-field approximation with gradient-based optimization \citep{kucukelbir2017automatic}. However, other solvers are also generally applicable to our proposed causal generative models.

\begin{figure*}[t]
\centering
    \hfill
        \begin{subfigure}{0.25\linewidth}\centering
            \setlength{\abovecaptionskip}{0pt}
		\includegraphics[width=\linewidth]{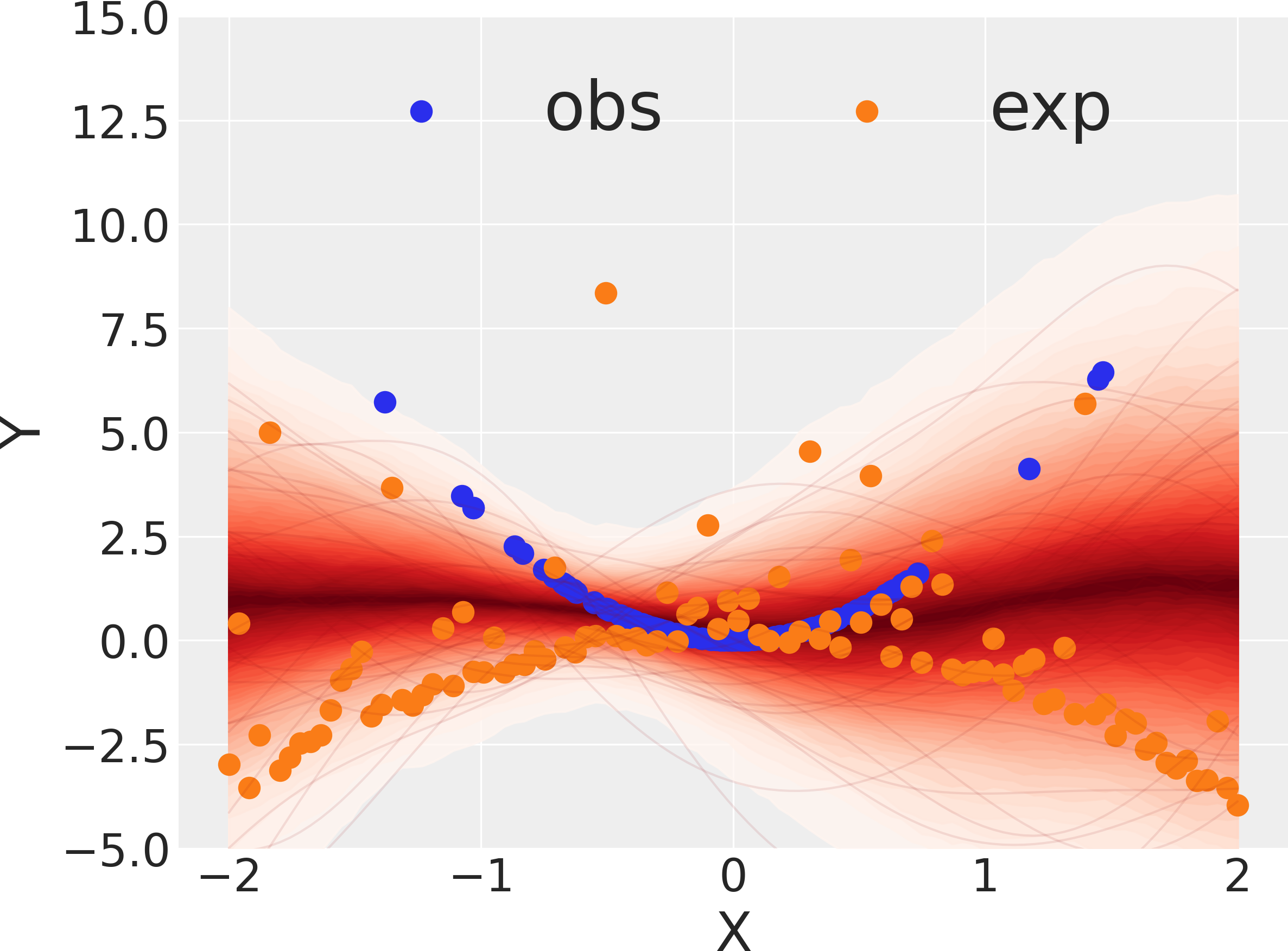}
		\caption{Polynomial}
		\label{fig:_5_1_a}
	\end{subfigure}\hfill
        \begin{subfigure}{0.25\linewidth}\centering
            \setlength{\abovecaptionskip}{0pt}
		\includegraphics[width=\linewidth]{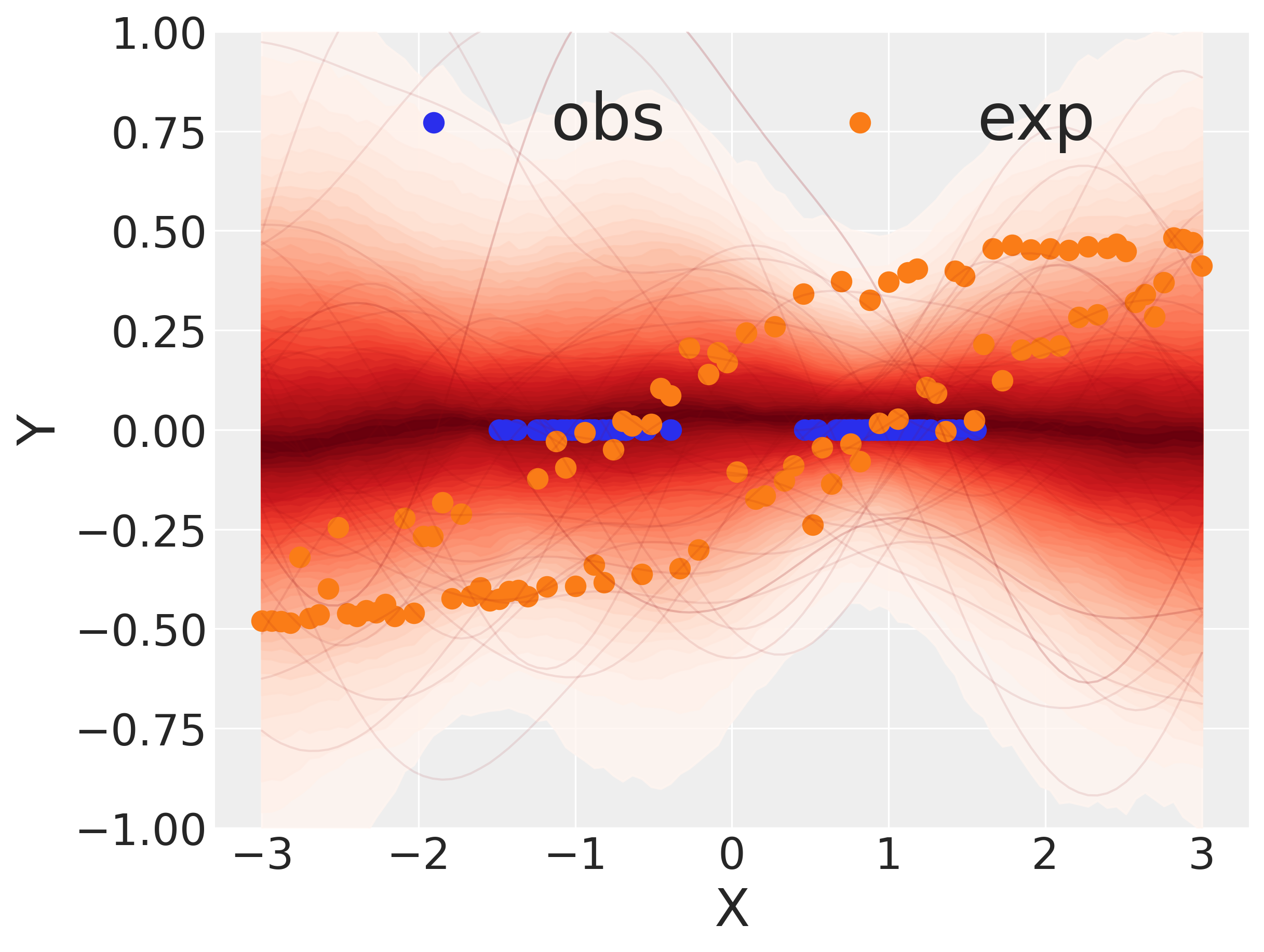}
		\caption{Logistic}
		\label{fig:_5_1_b}
	\end{subfigure}\hfill
        \begin{subfigure}{0.25\linewidth}\centering
            \setlength{\abovecaptionskip}{0pt}
		\includegraphics[width=\linewidth]{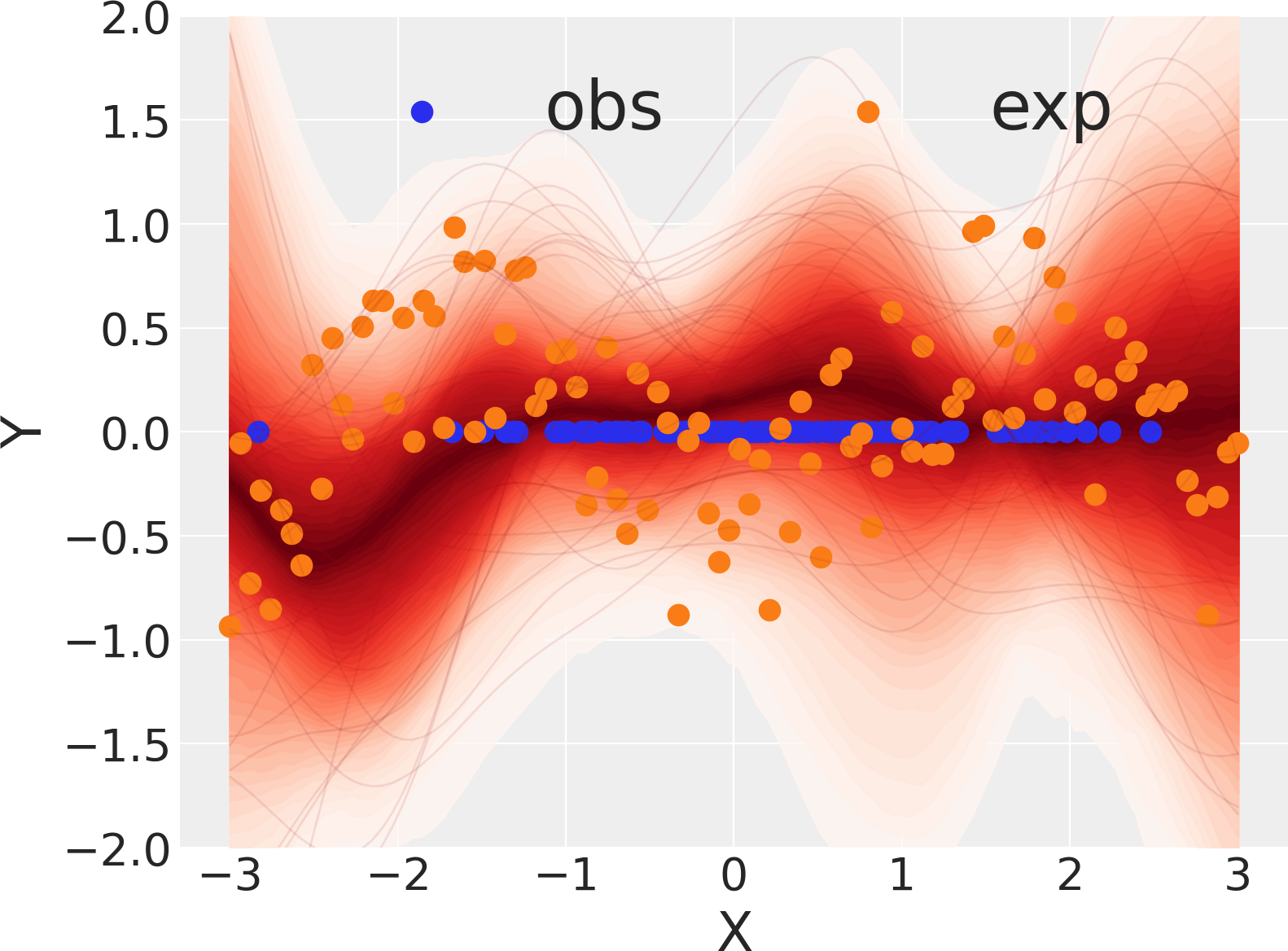}
		\caption{Phase}
		\label{fig:_5_1_c}
	\end{subfigure}\hfill
    \begin{subfigure}{0.25\linewidth}\centering
            \setlength{\abovecaptionskip}{0pt}
		\includegraphics[width=\linewidth]{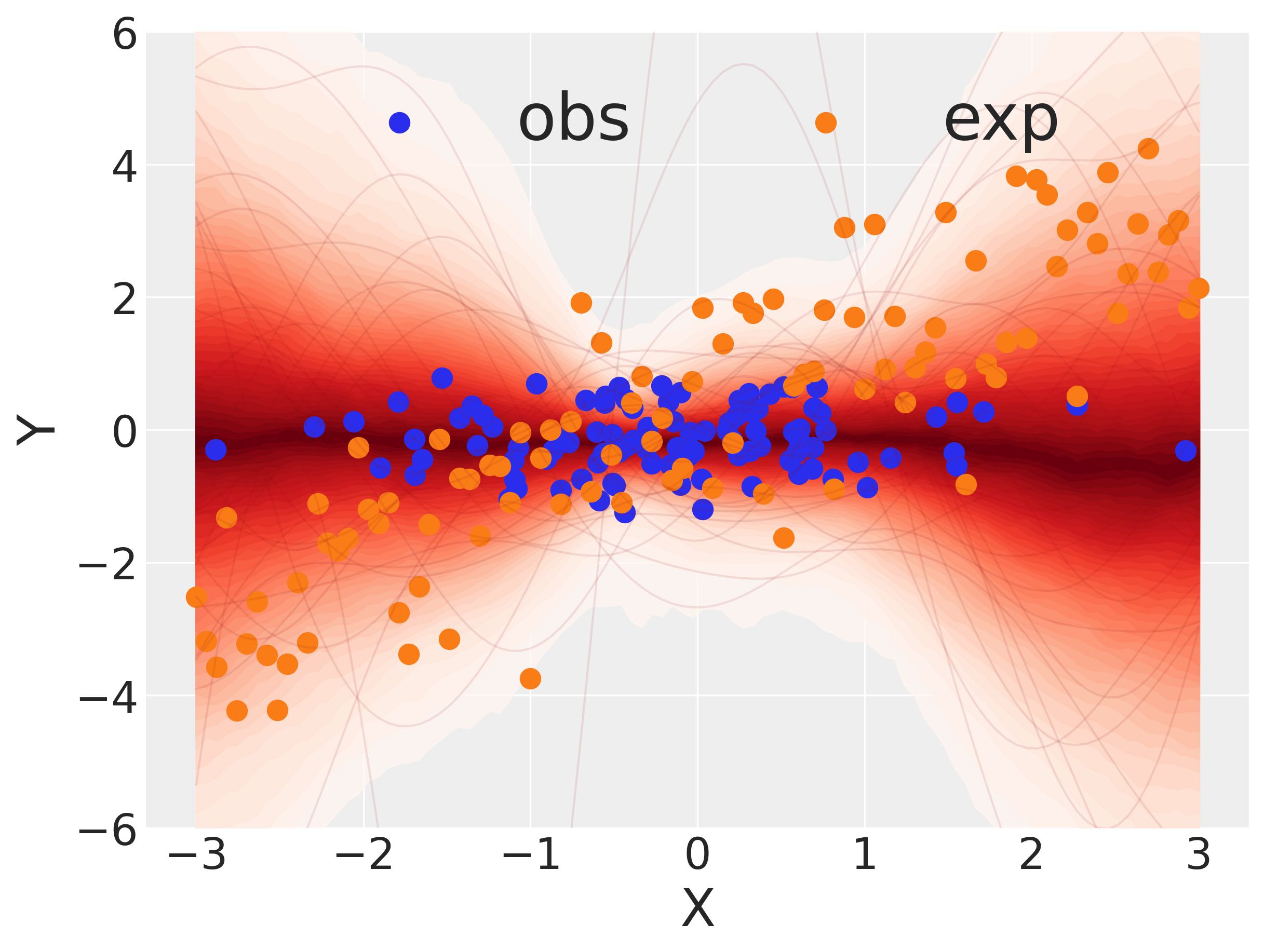}
		\caption{Linear}
		\label{fig:_5_1_d}
	\end{subfigure}\hfill\null
	\caption{Simulations comparing the derived posterior approximations over various reward functions using our proposed causal generative model. These functions include: (\subref{fig:_5_1_a}) polynomial; (\subref{fig:_5_1_b}) logistic; (\subref{fig:_5_1_c}) phase; and (\subref{fig:_5_1_d}) linear.}
	\label{fig:_5_1}
\end{figure*}

\begin{table*}[t]
   \centering
   \caption{Simulation results compare the negative log predictive likelihood of posterior distributions derived from standard GP and causally-enhanced GP models. A lower score indicates a better model fit.}
   \label{tab:gp_table}
   \begin{tabular}{|c|C{2cm}C{2cm}C{2cm}C{2cm}C{2cm}|}
     \toprule
     Model & Polynomial $\downarrow$ & Logistic $\downarrow$ & Phase $\downarrow$ & Linear$ \downarrow$ & IST $\downarrow$ \\
     \midrule
     Standard GP & 812.25328944  & 212.90048944 & 186.74978944 &  873.15343639 & 1981.7912937 \\
     Causal GP & \textbf{370.78863053} & \textbf{204.75931926} & \textbf{176.92091926} & \textbf{463.48699235} &  \textbf{34.06876521}\\
     \bottomrule
   \end{tabular}
\end{table*}

\section{Simulations and Experiments}
We demonstrate our algorithms using causal models with various reward functions and real-world data collected from the International Stroke Trial (IST; \citet{carolei1997international}). For all experiments, we evaluate the learned posteriors $P(\*\theta \mid \1D)$ using the negative log predictive likelihood (NLPL) $\1S$ over experimental data $\1D^* = \{(\*x_i^*, y_i^*, \*z_i^*)\}_{i=1}^{N^*}$ drawn from the actual interventional distribution $P(Y, \*Z \mid \doo(\*x))$. That is,
\begin{align}
    \1S = - \log{\int_{\*\theta} \prod_{i = 1}^{N^*} P(y_i^* \mid \doo(\*x_i^*), \*z^*_i; \*\theta) P(\*\theta \mid \1D) d\*\theta} \label{eq:_nlpl}
\end{align}
The above score $\1S$ is also referred to as Bayes Free Energy \citep{watanabe2013widely}, which can be approximated using the Bayesian Information Criterion (BIC; \citep{schwarz1978estimating}) in regular statistical models (e.g., GP).

We report the NLPL score $\1S$ comparing the standard GP and the proposed Causal GP in \Cref{tab:gp_table}. We also plot the posteriors over treatment effects learned by CGP for synthetic models, shown in \Cref{fig:_5_1}. In all experiments, the actual interventional data collected from randomized trials deviate significantly from the observational data due to unobserved confounding. Our analysis reveals that causal GP consistently achieves lower NLPL scores across all experiments than standard GP, suggesting better generalization to actual experimental data. Additionally, the posterior learned by Causal GP (highlighted in \textcolor{red}{red} in \Cref{fig:_5_1}) contains the actual experimental data (\textcolor{orange}{orange}). On the other hand, standard GP fails to learn the actual treatment effects since it consistently overfits the confounded observations (\textcolor{blue}{blue} points in \Cref{fig:gp}). For more details on the experimental setup, we refer readers to \Cref{appendix:b}.

\textbf{Polynomial Function.} Consider again the causal model described in \Cref{exp:_1} where the reward function is a polynomial function $Y \gets -X^2 + U^2$. We apply our proposed causal GP to derive posteriors over the treatment effects $\invE{Y}{x}$ for latent cardinality $d = 30$. The posterior distributions obtained by the standard GP and causal GP models are shown in \Cref{fig:gp_confounded,fig:_5_1_a} respectively.

\textbf{Logistic Function.} Consider a causal model where values of action $X$ and reward $Y$ are given by $X \gets E \times U$ and $Y \gets 1 / (1 + e^{-X + E\times U})$ respectively; $U$ is an unobserved variable drawn from a standard normal distribution, and $E$ is a uniformly drawn over a binary domain $\{-1, 1\}$. \Cref{fig:gp_confounded_logistic,fig:_5_1_b} show the posterior distributions obtained by the standard GP and causal GP from confounded observations, respectively. The observational data is highlighted in \textcolor{blue}{blue}, and the experimental data in \textcolor{orange}{orange}.

\textbf{Phase Function.} Consider a causal model where values of action $X$ and reward $Y$ are given by $X \gets U$ and $Y \gets \sin(X)^2 + \cos(U)^2 - 1$ respectively; $U$ is an unobserved variable drawn from a standard normal distribution. We show in \Cref{fig:gp_confounded_phase,fig:_5_1_c} the posterior distributions obtained by the standard GP and causal GP from confounded observations, respectively.

\textbf{Linear Function.} Consider a causal model where values of action $X$ and reward $Y$ are given by $X \gets -U$ and $Y \gets X + U + E$ respectively; $U$ is an unobserved variable drawn from a standard normal distribution, and $E$ is an independent noise drawn from a normal distribution $\texttt{Normal}(0, 0.5)$. \Cref{fig:gp_confounded_linear,fig:_5_1_d} shows the posterior distributions obtained by the standard GP and causal GP from confounded observations, respectively.

\textbf{IST.} The International Stroke Trial \citep{carolei1997international} is a randomized study of over $19,000$ patients that established aspirin as a treatment for acute ischemic stroke. The database includes patient age, gender, conscious state at randomization, and systolic blood pressure at randomization, among other variables. We are interested in evaluating the causal effect of a patient's blood pressure $X$ on their predicted probability of death $Y$ at 14 days after admission. We conscious state $U$ as unobserved confounding, and the age $Z$ as the observed covariate. To simulate confounding bias, we create observational data by selecting the dataset based on a patient's conscious state and age.

\Cref{tab:gp_table} shows the negative log predictive likelihood score $\1S$ for posteriors obtained by both Causal GP and standard GP. The lower the score is, the better the posterior fits the actual experimental data. As expected, standard GP fails to generalize as it overfits confounded observations.
\vspace{-0.1in}
\section{Conclusions}
This paper addresses the challenge of evaluating causal effects from confounded observational data in complex domains, focusing on the canonical bandit model with basic temporal ordering among action and reward variables. We introduce a new family of causal generative models with a finite number of latent states, which can accurately approximate the observational distribution and treatment effects in any causal model. Using this framework, we present a novel Causal Gaussian Process model that automates latent-space discretization, enabling the derivation of posterior distributions over target causal effects despite confounding. Simulation results show that these posteriors are resilient to unmeasured confounding and generalize well to actual experimental data collected from randomized trials.
%Future work will aim to develop models for time series data and integrate additional qualitative causal knowledge.

%\section*{Reproducibility Statement}
%The complete proof of all theoretical results presented in this paper, including \Cref{thm:approx} and \Cref{prop:eqx,prop:equ,prop:cgm}, can be found in Appendix \ref{appendix:a}. Detailed descriptions of the experimental setup are provided in Appendix \ref{appendix:b}. All appendices are included as part of the supplementary material following the "References" section. Please note that all experiments are synthetic and do not involve the introduction of any new assets.

% References
\bibliography{book,add_ref}

\newpage

\onecolumn

\title{Causal Gaussian Processes for Robust Treatment Effect Evaluation with Unobserved Confounding (Supplementary Material)}
\maketitle

\appendix
\crefalias{section}{appendix}

\section{Proofs} \label{appendix:a}
This section will provide proof of all the theoretical results in the paper. We will first introduce some propositions that are necessary for the construction of causal canonical models (\Cref{prop:eqx,prop:equ}). We then put things together and formally show the causal approximation property of CCMs (\Cref{thm:approx}).

\textbf{Discretizing State-Action Space.} We first introduce the necessary tools to construct the causal canonical model. First, we will describe a simple function parametrization of the treatment effect that allows us to partition the state-action space into finite equivalence classes.
\begin{definition}\label{def:simple_e}(Simple Treatment effects)
    Let $\1M$ be an SCM with action $\*X$, reward $Y$ and covariate $\*Z$. The simple treatment effect of action $\*X$ on reward $Y$ conditioning on $\*Z$ is a function $\widehat{\3E}_{\*x}[Y \mid \*z]: \3R^{n+l} \mapsto \3R$ of the form $\widehat{\3E}_{\*x}[Y \mid \*z] = \sum_{i = 1}^N \invE{Y \mid \*z_i}{\*x_i} \I_{\3X_i}(\*x)\I_{\3Z_i}(\*z)$, where $\3X_1, \dots, \3X_N$ and $\3Z_1, \dots, \3Z_N$ are disjoint subsets in $\3R^n$ and $\1R^l$ respectively; and $\*x_i$ and $\*z_i$ are, respectively a realization in $\3X_i$ and $\3Z_i$ for every $i = 1, \dots, N$.
\end{definition}
The next proposition ensures that the simple treatment effect function in \Cref{def:simple_e} can effectively approximate measurable treatment effects for all actions $\*x \in \3R^n$ in an arbitrary causal model.
\begin{restatable}[Equivalence Classes of State and Action]{proposition}{propeqx}\label{prop:eqx}
    Let $\1M$ be an SCM with action $\*X$, reward $Y$ and covariate $\*Z$. Then for every $\epsilon > 0$, there exists a simple causal effect $\widehat{\3E}_{\*x}[Y \mid \*z]$ such that $\int_{\3R^n} \left \lvert \3E_{\*x} \Brackets{Y \mid \*z} - \widehat{\3E}_{\*x} \Brackets{Y \mid \*z}  \right \rvert d\*x d\*z < \epsilon$. Henceforth, we will consistently refer to the finite sequence of subsets $\3X_1, \dots, \3X_k$ (or realizations $\*x_1, \dots, \*x_k$) associated with $\widehat{\3E}_{\*x}[Y \mid \*z]$ as \emph{equivalence classes of action}.
\end{restatable}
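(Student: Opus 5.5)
The plan is to treat $g(\*x,\*z) \triangleq \invE{Y \mid \*z}{\*x}$ as a function in $L_1(\3R^{n+l})$, which it is by the standing assumption of \Cref{sec:_2}, and to approximate it in two stages. First, replace $g$ by a continuous function with compact support. Second, replace that function by a step function on a fine grid whose cell values are point evaluations of $g$ itself. The second stage has to be arranged carefully. \Cref{def:simple_e} requires the coefficients to be actual values $\invE{Y\mid \*z_i}{\*x_i}$ at chosen representatives $(\*x_i,\*z_i)\in \3X_i\times\3Z_i$. It does not allow arbitrary averages. I would also read the sum in \Cref{def:simple_e} as ranging over product cells $\3X_i\times\3Z_i$ of a grid; this is exactly the flexibility needed, since repeated rectangles can be indexed separately.

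\textbf{Step 1 (truncation).} By integrability and dominated convergence, choose a box $B=[-R,R]^{n+l}$ such that $\int_{B^c}|g| < \epsilon/3$. Outside $B$ the simple effect will be taken to be $0$, which costs at most $\epsilon/3$.

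\textbf{Step 2 (Lusin plus uniform continuity).} By Lusin's theorem \citep[Ch.~1]{stein2009real}, there is a compact $K\subseteq B$ with $\lambda(B\setminus K)<\delta$ on which $g$ is continuous, hence uniformly continuous. Fix $\eta>0$ and partition $B$ into dyadic cubes $Q_1,\dots,Q_N$ of side small enough that $|g(p)-g(q)|<\eta$ whenever $p,q\in K$ lie in a common cube. Each cube is a product $\3X_i\times\3Z_i$ of half-open boxes.

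\textbf{Step 3 (choosing representatives).} For a cube with $Q_i\cap K\neq\emptyset$, pick $(\*x_i,\*z_i)\in Q_i\cap K$. On $Q_i\cap K$ the error is then below $\eta$, contributing at most $\eta\,\lambda(B)$ in total. For cubes missing $K$, set the coefficient to $0$; formally, choose a representative and drop that cell from the sum. These cubes lie inside $B\setminus K$ and contribute at most $\int_{B\setminus K}|g|$.

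The main obstacle is the set $(Q_i\cap K^c)$ inside cubes that do meet $K$. There the error is $|g - g(\*x_i,\*z_i)|$, and $g$ is not controlled pointwise. I would bound it by $\int_{B\setminus K}|g| + \sup_K|g|\cdot\lambda(B\setminus K)$. Since $g$ is continuous on the compact set $K$, $\sup_K|g| = M_K$ is finite. The difficulty is that $M_K$ depends on $K$, and hence on $\delta$.

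To break this circularity, first truncate in value. Choose $M$ such that $\int_{\{|g|>M\}}|g|<\epsilon/6$; this is possible because $g\in L_1$. Then apply Lusin to the set $\{|g|\le M\}\cap B$, giving a compact $K$ on which $|g|\le M$, and take $\delta$ so small that $2M\delta + \int_{B\setminus K}|g| < \epsilon/3$, using absolute continuity of the integral. Representatives come from $K$, so every coefficient has modulus at most $M$. Finally take $\eta\lambda(B)<\epsilon/3$. Summing the three contributions gives total $L_1$ error below $\epsilon$. The resulting cells $\3X_i\times\3Z_i$ are disjoint, so this yields the simple treatment effect of \Cref{def:simple_e}. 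The underlying point is that $L_1$ approximation needs no continuity of $g$ off a small set, only that sampled coefficients are uniformly bounded.
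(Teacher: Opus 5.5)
Your argument is correct up to a small bookkeeping repair, and it takes a genuinely different route from the paper. The paper's proof is the textbook density of simple functions in $L_1$. It splits $\invE{Y\mid \*z}{\*x}$ into positive and negative parts, approximates each within $\epsilon/2$ by a simple function \citep[Prop.~3.9, 3.44]{axler2020measure}, and subtracts. That is short and needs no topology. However, the resulting simple function has arbitrary measurable level sets in $\3R^{n+l}$ and arbitrary real coefficients. So it does not have the form \Cref{def:simple_e} demands: product cells $\3X_i\times\3Z_i$ with coefficients $\invE{Y\mid \*z_i}{\*x_i}$ at representative points. The remark after the proposition, that only finitely many interventions $\doo(\*x_i)$ need to be modeled, relies on exactly that point-evaluation form.

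Your construction (truncation, Lusin, uniform continuity, dyadic grid) costs more and uses the Euclidean structure. In return it produces precisely that form, with sampled coefficients bounded by $M$. Your reading of the index set as grid cells is also forced rather than merely convenient. If the $\3X_i$ were literally pairwise disjoint and the $\3Z_i$ pairwise disjoint, each $\*x$-slice of the approximant would be supported on a single $\3Z_i$. For $n=l=1$, the L-shaped indicator $\I_{[0,1]\times[0,2]}+\I_{(1,2]\times[0,1]}$ then stays at $L_1$ distance at least $1$ from every such function.

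The repair concerns $\lambda(B\setminus K)$. Since $K\subseteq\{|g|\le M\}\cap B$, this measure is not of order $\delta$; you only get $\lambda(B\setminus K)<\delta+\lambda(B\cap\{|g|>M\})$. So the coefficient term $M\lambda(B\setminus K)$ contains $M\lambda(\{|g|>M\})$, which your choice of $\delta$ (made after $M$) does not touch. Chebyshev controls it: $M\lambda(\{|g|>M\})\le\int_{\{|g|>M\}}|g|$. Similarly, $\int_{B\setminus K}|g|\le\int_{\{|g|>M\}}|g|+M\delta$ holds directly, with no appeal to absolute continuity. The whole $B\setminus K$ contribution is therefore at most $2\int_{\{|g|>M\}}|g|+2M\delta$. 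Choose $M$ with $\int_{\{|g|>M\}}|g|<\epsilon/12$, then $\delta$ with $M\delta<\epsilon/12$. With the constants as you wrote them, the total can exceed $\epsilon$ by about $\epsilon/6$.
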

\begin{proof}
The proof follows from \citep[Prop.~3.44]{axler2020measure}. Specifically, let fix $\invE{Y \mid \*z}{x}^+$ and $\invE{Y \mid \*z}{x}^-$ denote the positive and negative parts of the treatment effects $\invE{Y \mid \*z}{x}$, respectively. For any $\epsilon > 0$ there exist simple functions $\widehat{\3E}_{\*x}[Y \mid \*z]^{(1)}$ and $\widehat{\3E}_{\*x}[Y \mid \*z]^{(2)}$ such that
\begin{align}
    &\int_{\3R^{n+l}} \left \lvert \3E_{\*x}^+ \Brackets{Y \mid \*z} - \widehat{\3E}_{\*x}^{(1)} \Brackets{Y \mid \*z} \right \rvert d\*x d \*z < \frac{\epsilon}{2}, &&\int_{\3R^{n+l}} \left \lvert \3E_{\*x}^- \Brackets{Y \mid \*z} - \widehat{\3E}_{\*x}^{(2)} \Brackets{Y \mid \*z}  \right \rvert d\*x d\*z < \frac{\epsilon}{2}
\end{align}
The existence of $\widehat{\3E}_{\*x}[Y \mid \*z]^{(1)}$ and $\widehat{\3E}_{\*x}[Y \mid \*z]^{(2)}$ follows from \citep[Prop.~3.9]{axler2020measure}. Let $\widehat{\3E}_{\*x}[Y \mid \*z] = \widehat{\3E}_{\*x}[Y \mid \*z]^{(1)} - \widehat{\3E}_{\*x}[Y \mid \*z]^{(2)}$. Then $\widehat{\3E}_{\*x}[Y \mid \*z]$ is a simple function and
\begin{align}
    &\int_{\3R^{n+l}} \left \lvert  \3E_{\*x} \Brackets{Y \mid \*z} - \widehat{\3E}_{\*x} \Brackets{Y \mid \*z}  \right \rvert d\*x d\*z \\
    &= \int_{\3R^{n+l}} \left \lvert \3E_{\*x}^+ \Brackets{Y \mid \*z} - \widehat{\3E}_{\*x}^{(1)}\Brackets{Y \mid \*z} + \3E_{\*x}^- \Brackets{Y \mid \*z} - \widehat{\3E}_{\*x}^{(2)} \Brackets{Y}  \right \rvert d\*xd\*z \\
    &\leq \int_{\3R^[n+l]} \left \lvert \3E_{\*x} \Brackets{Y \mid \*z}^+ - \widehat{\3E}_{\*x} \Brackets{Y \mid \*z}^{(1)} \right \rvert d\*x d\*z + \int_{\3R^{n+l}} \left \lvert \3E_{\*x} \Brackets{Y \mid \*z}^- - \widehat{\3E}_{\*x} \Brackets{Y \mid \*z}^{(2)} \right \rvert d\*x d\*z
\end{align}
Simplifying the above equation gives $\int_{\3R^{n+l}} \left \lvert \3E_{\*x} \Brackets{Y \mid \*z} - \widehat{\3E}_{\*x} \Brackets{Y \mid \*z}  \right \rvert d\*x d\*z \leq \epsilon$. This proves the statement.
\end{proof}
\Cref{prop:eqx} says that for any causal model $\1M$, there must exist a simple treatment effect function that is capable of approximating the original treatment effect in $\1M$ with arbitrary precision. Moreover, the simple treatment effect in \Cref{def:simple_e} only takes values of the original treatment effect at a finite set of realized state-action pairs $(\*x_1, \*z_1), \dots, (\*x_N, \*z_N)$. For any other realizations $(\*x, \*z)$, its treatment effect will be approximated with that of the action $\*x_i$ given context $\*z_i$ such that both pairs $(\*x, \*z)$ and $(\*x_i, \*z_i)$ belong to the same product of equivalence class $\3X_i \times \*Z_i$. It is thus sufficient to only model the treatment effects induced by finite interventions $\doo(\*x_1), \dots, \doo(\*x_N)$. 

\paragraph{Discretizing Exogenous Domain.}So far we have decomposed the action domain $\3R^n$ and reduced it to a finite set of equivalence classes $\*x_1, \dots, \*x_N$. This means that to effectively approximate the observational distribution and interventional effect in the ground-truth model $\1M$, it suffices to model a finite number of potential outcome variables $\*X, Y$ and $Y_{\*x_1}, \dots, Y_{\*x_N}$. \footnote{We will refer to observed variables $\*X, Y, \*Z$ as potential outcomes of the natural intervention $\doo(\emptyset)$.}\footnote{Since $\*Z$ is a non-descendant of $\*X$, the potential outcome $\*Z_{\*x}(\*u) = \*Z$ for any $\*x, \*u$ \citep{galles:pea97b}. } We will next introduce a family of simple functions mapping from the exogenous domains $\3R^m$ to the action $\3R^n$, the outcome domain $\3R$, or the covariate domain $\3R^l$ to approximate this finite set of potential outcomes.
\begin{restatable}[Equivalence Classes of Exogenous]{proposition}{propequ}\label{prop:equ}
    Let $\1M$ be an SCM with action $\*X$, reward $Y$, and covariate $\*Z$. Let $\3X$ be a finite set of realizations $\*x \in \3R^n$. Then for every $\epsilon, \delta > 0$, there exist simple potential outcomes $\widehat{\*X}, \widehat{Y}$, $\widehat{\*Z}$ and $\widehat{Y}_{\*x}$, for all $\*x \in \3X$, that converge to their corresponding potential outcomes $\*X, Y, \*Z$ and $Y_{\*x}$, for all $\*x \in \3X$, almost everywhere. That is,
    \begin{align}
        &P\Parens{\Braces{\*u \in \3R^m :\left \lvert \widehat{\*X}(\*u) - \*X(\*u) \right \rvert + \left \lvert \widehat{Y}(\*u) - Y(\*u) \right \rvert + \left \lvert \widehat{\*Z}(\*u) - \*Z(\*u) \right \rvert + \sum_{\*x \in \3X} \left \lvert \widehat{Y}_{\*x}(\*u) - Y_{\*x}(\*u) \right \rvert > \epsilon} } < \delta,
    \end{align}
    Henceforth, we will consistently refer to the finite sequence of subsets $\3U_1, \dots, \3U_k$ associated with $\widehat{\*X}, \widehat{Y}, \widehat{\*Z}$ and $\widehat{Y}_{\*x}$, for all $\*x \in \3X$, as \emph{equivalence classes of exogenous}.
\end{restatable}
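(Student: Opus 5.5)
The plan is to view the finitely many potential outcomes as the coordinates of one measurable map and approximate that map by a simple function using the standard measure-theoretic machinery. Write $\3X = \Braces{\*x_1, \dots, \*x_K}$. Define
\[
F : \3R^m \to \3R^{n} \times \3R \times \3R^{l} \times \3R^{K}, \qquad F(\*u) = \Parens{\*X(\*u), Y(\*u), \*Z(\*u), Y_{\*x_1}(\*u), \dots, Y_{\*x_K}(\*u)}.
\]
Each coordinate is a composition of the measurable structural functions $f_Z, f_X, f_Y$ evaluated at $\*u$ (and at fixed $\*x_k$ for the potential outcomes; recall $\*Z_{\*x}(\*u) = \*Z(\*u)$ by the footnote). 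So $F$ is Borel measurable and everywhere finite. The target event is $\Braces{\*u : \lVert \widehat{F}(\*u) - F(\*u) \rVert_1 > \epsilon}$, and the goal is a simple $\widehat{F}$ built on a single common partition $\3U_1, \dots, \3U_k$ with $P$ of that event below $\delta$.

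\textbf{Step 1 (truncation).} Since $\lVert F(\*u)\rVert_1 < \infty$ for every $\*u$, the sets $A_M = \Braces{\*u : \lVert F(\*u)\rVert_\infty \le M}$ increase to $\3R^m$. By continuity of the probability measure $P(\*U)$, choose $M$ with $P(A_M^c) < \delta$.

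\textbf{Step 2 (quantization).} Let $D = n + 1 + l + K$. Partition the cube $[-M, M]^{D}$ into finitely many half-open cubes $C_1, \dots, C_k$ of side less than $\epsilon / D$, and pick a point $c_j \in C_j$. Set $\3U_j = F^{-1}(C_j) \cap A_M$. These sets are measurable, disjoint, and cover $A_M$. Define $\widehat{F} = \sum_j c_j \I_{\3U_j}$, with $\widehat{F} = 0$ on $A_M^c$ (equivalently, add $A_M^c$ as one more class). On $A_M$, every coordinate of $\widehat{F} - F$ is smaller than $\epsilon / D$, so the $L_1$ error is below $\epsilon$. Hence the bad event lies inside $A_M^c$, which has probability less than $\delta$. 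Reading off the coordinates of $\widehat{F}$ gives simple $\widehat{\*X}, \widehat{Y}, \widehat{\*Z}, \widehat{Y}_{\*x}$ sharing the common partition $\3U_1, \dots, \3U_k$. These are exactly the equivalence classes of exogenous in the statement.

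\textbf{Step 3 (consistency, the main obstacle).} The approximants should be realizable as potential outcomes of one canonical SCM of the form in \Cref{eq:simple_z,eq:simple_x,eq:simple_y}. For example, $\widehat{Y}$ should equal $\widehat{Y}_{\widehat{\*X}(\*u)}$, and $\widehat{\*X}$ should be a function of $(\widehat{\*Z}, \*u)$. Because all coordinates are constant on each $\3U_j$, I would handle this cell by cell. On cell $j$, the canonical $\widehat{f}_{\*X}$ outputs the $\*X$-coordinate of $c_j$. The canonical $\widehat{f}_Y$ outputs the $Y$-coordinate when the action equals that value, and the $Y_{\*x_i}$-coordinate at $\*x = \*x_i$, exactly as in \Cref{exp:_2}.

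The one conflict arises if the quantized observed action coincides with some $\*x_i$ while the two reward values differ. I would remove it by perturbing the chosen $c_j$ within its cube, which costs nothing since the cube has positive width. The statement itself only demands the convergence in probability, which Steps 1–2 already give; Step 3 is bookkeeping needed downstream for \Cref{thm:approx}.
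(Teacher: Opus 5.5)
Your proposal is correct and takes essentially the paper's route: the paper truncates each coordinate at $\pm k$, quantizes it on a dyadic grid (following Axler's Prop.~2.89), and intersects the resulting partitions of the exogenous domain, which is exactly your cube partition of $[-M,M]^D$ pulled back through $F$. Your joint version also makes explicit the split of $\epsilon$ and $\delta$ across coordinates that the paper leaves implicit; one cosmetic fix is to tile a slightly larger half-open cube so that points with a coordinate equal to $M$ are covered. Your Step 3 is not needed for this proposition, since the paper performs that consistency adjustment (adding the quantized observed actions to the action classes) in the proof of \Cref{thm:approx}.
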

\begin{proof}
    We first consider the approximation for a potential outcome variable $Y_{\*x}(\*u)$. This approximation procedure extends immediately for a combination of observed variables $\*X(\*u), Y(\*u), \*Z(\*u)$ and multiple potential outcomes $Y_{\*x_1}(\*u), \dots, Y_{\*x_k}(\*u)$. 

    The idea of the proof follows \citep[Prop.~2.89]{axler2020measure}. For each $k \in \3Z^+$ and $n \in \3Z$, the interval $[n, n + 1)$ is divided into $2^k$ equally sized half-open subintervals. If $Y_{\*x}(\*u) \in [0, k]$, we define $\widehat{Y}^{(k)}_{\*x}(\*u)$ to be the left endpoint of the subinterval into which $Y_{\*x}(\*u)$ falls; if $Y_{\*x}(\*u) \in [-k, 0)$, we define $\widehat{Y}^{(k)}_{\*x}(\*u)$ to be the right endpoint of the subinterval into which $Y_{\*x}(\*u)$ falls; and if $\lvert Y_{\*x}(\*u) \rvert > k$, we define $\widehat{Y}^{(k)}_{\*x}(\*u)$ to be $\pm k$. Specifically, let
    \begin{align}
        \widehat{Y}^{(k)}_{\*x}(\*u) =
        \begin{dcases}
            \frac{m}{2^k} &\mbox{if $0 \leq Y_{\*x}(\*u) \leq k$ and $m \in \3Z$ is such that } Y_{\*x}(\*u) \in \left [ \frac{m}{2^k}, \frac{m+1}{2^k}\right )\\
            \frac{m+1}{2^k} &\mbox{if $-k \leq Y_{\*x}(\*u) \leq 0$ and $m \in \3Z$ is such that } Y_{\*x}(\*u) \in \left [ \frac{m}{2^k}, \frac{m+1}{2^k}\right )\\
            k &\mbox{if $Y_{\*x}(\*u) > k$}\\
            -k &\mbox{if $Y_{\*x}(\*u) < -k$}
        \end{dcases}
    \end{align}
    The definition of $\widehat{Y}^{(k)}_{\*x}(\*u)$ implies that for all $\*u \in \3R^m$ such that $Y_{\*x}(\*u) \in [-k, k]$, $ \left \lvert \widehat{Y}^{(k)}_{\*x}(\*u) - Y_{\*x}(\*u) \right \rvert \leq 1/2^k$. Increase the value of $k \in \3Z^+$ such that $1/ 2^k < \epsilon$ and $P\Parens{Y_{\*x}(\*U) \in [-k, k]} < \delta$, which proves the statement for a potential outcome $Y_{\*x}(\*U)$. We could repeat the same discretization procedure for the observed action $\*X(\*u)$, observed reward $Y(\*u)$, covariate $\*Z(\*u)$, and every potential outcome $Y_{\*x}(\*U)$, $\*x \in \3X$. Taking intersections of these partitionings over the exogenous domain $\*U \in \3R^m$ completes the proof. 
\end{proof}

\textbf{Constructing Canonical Models.} We are now ready to put things together and provide a formal justification for the approximation property of canonical models. Specifically, given finite equivalence classes over the action and exogenous domain, one could obtain a canonical model $\1N$ that approximates the treatment effect $\invE{Y}{\*x}$ in the original causal model $\1M$ with arbitrary precision. This approximation is supported by \Cref{prop:eqx,prop:equ}.

What remains is to ensure that the constructed model $\1N$ is consistent with the ground-truth model $\1M$ in terms of the observational distribution $P(\*X, Y, \*Z)$. We will achieve this property by enumerating through the finite exogenous equivalence classes $\*u_1, \dots, \*u_N$; for every $\*u_i$, add the approximate observed action $\widehat{\*x}_i \gets \widehat{\*X}(\*u_i)$ to the equivalence classes of action $\3X$. 
For every newly added action $\widehat{\*x}_i$, we set its simple potential outcome $\widehat{Y}_{\widehat{\*x}'_i}(\*u_i)$ as the observed outcome $\widehat{Y}(\*u_i)$ in the constructed model $\1N$; for every unit such that $\widehat{\*X}(\*u_j) \neq \widehat{\*x}_i$, we will find the original action equivalence class $\3X_i$ such that $\widehat{\*x}_i \in \3X_i$, and set the simple potential outcome $\widehat{Y}_{\widehat{\*x}'_i}(\*u_j) = \widehat{Y}_{\*x_i}(\*u_j)$. Thanks to the construction of simple functions, this adjustment ensures the canonical model $\1N$'s consistency on the observational distribution $P(\*X, Y, \*Z)$ while maintaining its consistency on the treatment effect $\invE{Y \mid \*z}{\*x}$.
\thmapprox*
\begin{proof}
    Given finite equivalence classes over the action and exogenous domain, one could obtain a canonical model $\1N$ that approximates the treatment effect $\invE{Y \mid \*z}{\*x}$ in the original causal model $\1M$ with arbitrary precision. Specifically, let $\3U^c(k)$ be the set of exogenous values $\*u$ defined in \Cref{prop:equ}, and let $\3U(k) = \3R^m \backslash \3U^c(k)$. The $L_1$ error between the treatment effects defined in the SCM $\1M$ and the CCM $\1N$ could be written as:
    \begin{align}
        &\int_{\3R^{n+l}} \left \lvert \invE{Y \mid \*z; \1M}{\*x} - \invE{Y \mid \*z; \1N}{\*x}  \right \rvert d\*x d\*z \\
        &= \underbrace{\int_{\3U(k)}  \int_{\3R^{n+l}} \left \lvert \invE{Y(\*u) \mid \*z; \1M}{\*x} - \invE{Y(\*u) \mid \*z; \1N}{\*x}  \right \rvert d\*x d\*z dP(\*u)}_{\text{Term 1}}\\
        &+\underbrace{\int_{\3U^c(k)} \int_{\3R^{n+l}} \left \lvert  \invE{Y(\*u) \mid \*z; \1M}{\*x} - \invE{Y(\*u) \mid \*z; \1N}{\*x}  \right \rvert d\*x d\*z dP(\*u)}_{\text{Term 2}}
    \end{align}
    By the construction of set $\3U^c(k)$, Term 2 in the above equation gets increasingly smaller as the discretization granularity $k \in \3Z^+$ increases. Similarly, one could minimize the $L_1$ error with an arbitrary accuracy by increasing the parameter $k$ and the discretization granularity over the action domain $\*x \in \3R^n$. These approximation properties are supported by \Cref{prop:eqx,prop:equ}.

    What remains is to ensure that the constructed model $\1N$ is consistent with the ground-truth model $\1M$ in terms of the observational distribution $P(\*X, Y, \*Z)$. We will achieve this property by enumerating through the finite exogenous equivalence classes $\*u_1, \dots, \*u_N$; for every $\*u_i$, add the approximate observed action $\widehat{\*x}_i \gets \widehat{\*X}(\*u_i)$ to the equivalence classes of action $\3X$. For every newly added action $\widehat{\*x}_i$, we set its simple potential outcome $\widehat{Y}_{\widehat{\*x}'_i}(\*u_i)$ as the observed outcome $\widehat{Y}(\*u_i)$ in the constructed model $\1N$; for every unit such that $\widehat{\*X}(\*u_j) \neq \widehat{\*x}_i$, we will find the original action equivalence class $\3X_i$ such that $\widehat{\*x}_i \in \3X_i$, and set the simple potential outcome $\widehat{Y}_{\widehat{\*x}'_i}(\*u_j) = \widehat{Y}_{\*x_i}(\*u_j)$. Since this construction procedure only adds a finite number of points to the equivalence class of the action domain, these points have Lebesgue measure zero and do not affect the $L_1$ approximation error.
\end{proof}

\thmcgm*
\begin{proof}
\Cref{thm:approx} implies that for any SCM $\1M \in \2M$, there exists a CCM $\widehat{\1M} \in \widehat{\2M}$ such that $\1N$ approximates the observational distribution and treatment effects in $\widehat{\1M}$ with arbitrary accuracy $\epsilon$. It suffices to construct a CGM $\1N \in \2N$ approximating the CCM $\widehat{\1M}$ with arbitrary accuracy $\epsilon$. 

By the definiton of CCM in \Cref{def:ccm}, given any unit $\*u$, the values of state and action are given by a constant pair $(\*x_i, \*z_i)$, $\widehat{f}_{\*X}(\*u) \gets \*x_i$ and $\widehat{f}_{\*Z}(\*u) \gets \*z_i$. This could be trivially simulated using a conditional distribution $P(\*X, \*Z \mid \*u)$. 

Given a unit $\*u$, values of reward $Y$ in CCM $\widehat{\1M}$ are given by a simple function mapping from the state-action domain $\3R^{n+l}$ to reward domain $\3R$. That is, $\widehat{f}_Y(\*x, \*z, \*u) \gets \sum_{i}^N y_i \I_{\3X_i}(\*x)\I_{\3Z_i}(\*z)$. One could immediately construct a continuous function $h_{i}: \3R^{n+l} \mapsto \3R$ that approximates the above step function with arbitrary accuracy. The construction follows the same step as \citep[Prop.~3.48]{axler2020measure}. This completes the proof.
\end{proof}

\section{Experimental Setups} \label{appendix:b}
In this section, we will provide details on the simulation setups. We also conduct additional experiments on other SCM instances. For all experiments, we collect $1000$ observational samples and use them to draw $1,000$ posterior samples of the treatment effects; each sampled treatment effect function contains potential outcomes $Y_{\*x}$ valued at $1000$ different action assignments. All experiments are performed on a laptop with an Intel Core i9-12900H processor, an RTX 3080 GPU, and 64GB of memory. We use PyMC \citep{abril2023pymc} as the computational framework for probabilistic programming.

For all experiments, exogenous probabilities $P(u)$ are drawn from a uniform Dirichlet prior $\texttt{Dirichlet}\left (1, \dots, 1\right)$. For the treatment assignment distribution $P(\*X, \*Z|u)$, the mean $\*\mu_u$ and variance $\*\Sigma_{u}$ are drawn from a standard normal distribution $\texttt{Normal}(0, 1)$ and a half normal distribution $\texttt{Half-Normal}(0, 0.05)$ respectively. As for the prior distribution over the reward function $h_u(\*x)$, we use the radial basis function kernel with a length scale $\lambda_u$ drawn from a half normal distribution $\texttt{Half-Normal}(0, 5)$.

\paragraph{International Stroke Trial.} We filter the original IST data $\1D$ to create two different datasets, one used as the experimental data $\1D_{\text{exp}}$ and the other as the confounded observational data $\1D_{\text{obs}}$. This evaluation procedure allows us to simulate significant confounding bias using a real-world dataset. Specifically, for every patient $(x_i, y_i, z_i, u_i) \in \1D$ in the IST data, we include it in the subset $\1D_{\text{exp}}$ if one of the following conditions holds:
\begin{align}
    \begin{cases}
        140 < x_i < 180 \text{ and } y_i < 0.2\\
        (x \leq 140 \text{ or } x \geq 180) \text{ and } 0.2 < y_i < 0.5
    \end{cases}
\end{align}
Doing so allows us to generate a filtered dataset $\1D_{\text{exp}}$. For the sake of experiments, we will use $\1D_{\text{exp}}$ as the experimental data where there is no unmeasured confounding affecting the age $X$ and the subsequent probability of death $Y$. To simulate the presence of unobserved confounding, we will filter the patient samples $(x_i, y_i, z_i, u_i) \in \1D_{\text{exp}}$. Specifically, we include a patient $(x_i, y_i, z_i, u_i)$ in a new dataset $\1D'$ if one of the following conditions holds:
\begin{align}
    \begin{cases}
        x < 140 \text{ and } 16 \leq 30 u_i + z_i \leq 36\\
        140 < x_i < 180 \text{ and }121 \leq 30 u_i + z_i \leq 128 \\
        x \geq 180 \text{ and } 27 \leq 30 u_i + z_i \leq 49 
    \end{cases}
\end{align}
Doing so allows us to create a new filtered dataset $\1D'$. Finally, we drop the column containing the patient's conscious state $U$ and use it as the observational data $\1D_{\text{obs}}$. Due to the selection rules described above, distributions generating datasets $\1D_{\text{exp}}$ and $\1D_{\text{obs}}$ differ significantly. These selection rules simulate the causal mechanisms of unobserved confounding in practical domains \citep{kallus2018confounding,zhang2021bounding}.

\paragraph{Estimating Negative Log Predictive Likelihood.}Obtaining the exact estimation of the NLPL score $\1S$ in \Cref{eq:_nlpl} is challenging since it involves integration over the continuous space of model parameters $\*\theta$. To address this issue, we will compute the empirical estimates $\widehat{\1S}$ of the NLPL score. Specifically, given a sequence of state-action pairs in the experimental data $\bar{\*x^*} = (\*x^*_i)_{i = 1, \dots, N^*}$ and $\bar{\*z^*} = (\*z^*_i)_{i = 1, \dots, N^*}$, we samples $1000$ sequences of predictive outcomes $\hat{\*y}^{(j)} = (y^{(j)}_i)_{i = 1, \dots, N^*}$, and outcome variance $\sigma^2_j$, $j = 1, \dots, 1000$, from a posterior distribution $P(\*\theta \mid \1D)$. Given a predictive sequence $\hat{\*y}^{(j)}$ and a variance $\sigma^2_j$, the predictive likelihood over the actual experimental outcome $\*y^* = (y^*_i)_{i = 1, \dots, N^*}$ is defined as the likelihood of a multi-variate Gaussian model as follows:
\begin{align}
    P\left(\*y^* \mid \doo(\*x^*), \*z^*, \hat{\*y}^{(j)}, \sigma^2_j \right) = \prod_{i = 1}^{N^*} \frac{1}{\sqrt{2 \pi \sigma^2_j}} \exp\left(-\frac{\left(y^*_i - \hat{y}_i^{(j)}\right)}{2\sigma^2_j}\right)
\end{align}
The empirical estimate $\widehat{\1S}$ of the NLPL score is given by the neg log of the empirical mean of the above Gaussian likelihood function. That is, given predictive sequences $\hat{\*y}^{(j)} = (y^{(j)}_i)_{i = 1, \dots, N^*}$ and  variances $\sigma^2_j$, $j = 1, \dots, N'$,
\begin{align}
    \widehat{\1S} = - \log \left (\frac{1}{N'}\sum_{j = 1}^{N'} P\left (\*y^* \mid \doo(\*x^*), \*z^*, \hat{\*y}^{(j)}, \sigma^2_j \right)\right)
\end{align}
The above estimate $\widehat{\1S}$ is consistent, since the sampled predictive sequence $(\hat{\*y}^{(j)}, \sigma^2_j)_{j = 1, \dots, N'}$ already includes the posterior variance of the Gaussian processes.

\end{document}